\definecolor{shadecolor}{gray}{0.9}
\definecolor{shadecolor}{gray}{0.9}
\let\hat\widehat
\let\tilde\widetilde
\newcommand{\E}{{\mathbb E}}
\newcommand{\A}{{\mathcal{A}}}
\newcommand{\C}{{\mathcal{C}}}
\newcommand{\F}{{\cal F}}
\def\argmin{\mathop{\rm argmin}}
\theoremstyle{plain}
\newtheorem{theorem}{Theorem}[section]
\newtheorem{proposition}[theorem]{Proposition}
\newtheorem{lemma}[theorem]{Lemma}
\newtheorem{corollary}[theorem]{Corollary}
\theoremstyle{remark}
\newtheorem{remark}[theorem]{Remark}
\theoremstyle{definition}
\newtheorem{example}[theorem]{Example}
\DeclareMathOperator*{\argmax}{\arg\max}
\newcommand{\T}{\text{T}}
\def\E{{\mathbb E}}
\def\supp{\mathop{\text{supp}\kern.2ex}}
\def\argmin{\mathop{\text{arg\,min}\kern.2ex}}
\let\hat\widehat
\let\tilde\widetilde
\def\err{\mbox{\rm err}}
\def\A{{\mathcal A}}
\def\C{{\mathcal C}}
\def\F{{\mathcal F}}
\def\sfrac#1#2{{#1}/{#2}}
\newcommand{\dist}{\mathop{\rm {dist}}}
\newcommand{\Proj}{\text{Proj}}
\providecommand{\dom}{\mathop{\rm dom}}
\newcommand{\simiid}{\stackrel{\rm iid}{\sim}}
\newcommand{\R}{\mathbb{R}}
\newcommand{\defeq}{:=}
\newcommand{\mc}[1]{\mathcal{#1}}
\newcommand{\half}{\frac{1}{2}}
\newcommand{\dkl}[2]{D_{\rm kl}\left({#1} |\!| {#2}\right)}
\newcommand{\what}[1]{\widehat{#1}} 
\newcommand{\norm}[1]{\left\|{#1}\right\|} 
\newcommand{\tvnorm}[1]{\norm{#1}_{\rm TV}}
\newcommand{\normal}{\mathsf{N}}
\newcommand{\modcont}{\omega}
\newcommand{\opt}{^\star}
\newcommand{\error}{\mathop{\rm error}}
\renewcommand{\>}{\right\rangle}
\providecommand{\dchii}[2]{D_{\chi}\left({#1} |\!| {#2}\right)}
\newcommand{\hinge}[1]{\left[{#1}\right]_+} 
\newcommand{\fnmetric}{\kappa}
\newcommand{\fndist}{d}
\newcommand{\stddev}{\sigma}
\newcommand{\noise}{\varepsilon}
\newcommand{\pow}{k}
\newcommand{\statrv}{\xi}
\numberwithin{equation}{section}
\begin{document}

\begin{frontmatter}
\title{Local Minimax Complexity of \\ Stochastic Convex Optimization}
\runtitle{Local Minimax Complexity of Stochastic Convex Optimization}
\runauthor{Chatterjee, Duchi, Lafferty, and Zhu}

\begin{aug}
\vskip10pt
\author{\fnms{Yuancheng}
  \snm{Zhu${}^{*}$}\ead[label=e1]{lafferty@galton.uchicago.edu}}
\;
\author{\fnms{Sabyasachi} \snm{Chatterjee${}^{*}$}\ead[label=e2]{sabyasachi@galton.uchicago.edu}}
\;
\author{\fnms{John}
  \snm{Duchi${}^{\S\ddag}$}\ead[label=e3]{lafferty@galton.uchicago.edu}}
\, \and \,
\author{\fnms{John}
  \snm{Lafferty${}^{*\dag}$}\ead[label=e4]{lafferty@galton.uchicago.edu}}
\vskip10pt
\address{
\begin{tabular}{cc}
${}^*$Department of Statistics & ${}^\S$Department of Statistics\\
${}^\dag$Department of Computer Science & ${}^\ddag$Department of
  Electrical Engineering\\
University of Chicago & Stanford University
\end{tabular}
\\[10pt]
\today\\[5pt]
\vskip10pt
}
\end{aug}

\begin{abstract}
We extend the traditional worst-case, minimax analysis of stochastic
convex optimization by introducing a localized form of minimax
complexity for individual functions.  Our main result gives
function-specific lower and upper bounds on the number of stochastic
subgradient evaluations needed to optimize either the function or its
``hardest local alternative'' to a given numerical precision.  The
bounds are expressed in terms of a localized and computational
analogue of the modulus of continuity that is central to statistical
minimax analysis. We show how the computational modulus of continuity
can be explicitly calculated in concrete cases, and relates to the
curvature of the function at the optimum.  We also prove a
superefficiency result that demonstrates it is a meaningful benchmark,
acting as a computational analogue of the Fisher information in
statistical estimation. The nature and practical implications of the
results are demonstrated in simulations.
\end{abstract}
 
\vskip20pt 
\end{frontmatter}

\maketitle

\vskip10pt
\section{Introduction}


The traditional analysis of algorithms is based on a worst-case,
minimax formulation.  One studies the running time,
measured in terms of the smallest number of arithmetic operations
required by any algorithm to solve any instance in the family of
problems under consideration.  Classical worst-case complexity
theory focuses on discrete problems.  In the setting of
convex optimization, where the problem instances require numerical
rather than combinatorial optimization, \citet{NemirovskiYudin:83}
developed an approach to minimax analysis based on a first order
oracle model of computation.  In this model, an algorithm
to minimize a convex function can make queries
to a first-order ``oracle,'' and the complexity is defined as
the smallest error achievable using some specified 
minimum number of queries needed.  Specifically, the oracle is queried with an input
point $x\in \C$ from a convex domain $\C$, and returns an unbiased estimate
of a subgradient vector to the function $f$ at $x$.
After $T$ calls to the oracle, an algorithm $A$
returns a value $\hat x_A\in\C$, which is a random variable due to
the stochastic nature of the oracle, 
and possibly also due to randomness in the algorithm.  
The Nemirovski-Yudin analysis reveals that, in
the worst case, the number of calls to the oracle required to drive
the expected error $\E(f(\hat x_A) - \inf_{x\in\C}
f(x))$ below $\epsilon$ scales as $T = O(1/\epsilon)$ for the class of
strongly convex functions, and as $T = O(1/\epsilon^2)$ for the class
of Lipschitz convex functions.

In practice, one naturally finds that some functions are easier to
optimize than others.   Intuitively, if the function is ``steep'' near the optimum, then
the subgradient may carry a great deal of information, and
a stochastic gradient descent algorithm may converge relatively
quickly.  A minimax approach to analyzing
the running time cannot take this into account for a particular function, as it
treats the worst-case behavior of the algorithm over all functions.
It would be of considerable interest to 
be able to assess the complexity of solving 
an individual convex optimization problem.  Doing
so requires a break from traditional worst-case
thinking.    

In this paper we revisit the traditional view of the 
complexity of convex optimization from the point of view of
a type of localized minimax complexity.  In local minimax,
our objective is to quantify the intrinsic difficulty of optimizing a
specific convex function $f$. With the target $f$ fixed, we
take an alternative function $g$ within the same
function class $\F$, and evaluate how the maximum expected error decays with the
number of calls to the oracle, for an optimal algorithm 
designed to optimize either $f$ or $g$.  
The local minimax complexity $R_T(f;\F)$ is
defined as the least favorable alternative $g$:
\begin{equation}
R_T(f;\F) = \sup_{g\in\F} \;\inf_{A\in\A_T} \;\max_{h\in\{f,g\}}\,\error(A,h)
\end{equation}
where $\error(A,h)$ is some measure of error for the algorithm applied to function $h$.
In contrast, the traditional
global worst-case performance of the best algorithm, as defined by the 
minimax complexity $R_{T}(\F)$ of Nemirovsky and Yudin, is
\begin{equation}
R_T(\F) = \inf_{A\in\A_T} \;\sup_{g\in\F}\, \error(A,g).
\end{equation}
The local minimax complexity can be thought of as the difficulty of
optimizing the hardest alternative to the target function.
Intuitively, a difficult alternative is a function $g$ for which
querying the oracle with $g$ gives results similar to querying with
$f$, but for which the value of $x\in\C$ that minimizes $g$ is far
from the value that minimizes $f$.

Our analysis ties this function-specific notion of complexity to a
localized and computational analogue of the modulus of continuity that
is central to statistical minimax analysis
\citep{DonohoLiu87GR1,DonohoLiu90GR2}.  We show that the local minimax
complexity gives a meaningful benchmark for quantifying the difficulty
of optimizing a specific function by proving a superefficiency
result; in particular, outperforming this benchmark at some function
must lead to a larger error at some other function.  Furthermore, we
propose an adaptive algorithm in the one-dimensional case that is
based on binary search, and show that this algorithm automatically
achieves the local minimax complexity, up to a logarithmic factor.
Our study of the algorithmic complexity of convex optimization is
motivated by the work of \citet{CaiLow15}, who propose an analogous
definition in the setting of statistical estimation of a
one-dimensional convex function.  The present work can thus be seen as
exposing a close connection between statistical estimation and
numerical optimization of convex functions. In particular, our results
imply that the local minimax complexity can be viewed as a computational
analogue of Fisher information in classical statistical estimation.

In the following section we establish our notation, and give a 
technical overview of our main results, which characterize the local minimax
complexity in terms of the computational modulus of continuity.
In Section~\ref{sec:superefficiency}, we demonstrate the phenomenon of 
superefficiency of the local minimax complexity.
In Section~\ref{sec:algor} we present the algorithm that adapts to the 
benchmark, together with an analysis of its theoretical properties.
We also present simulations of the algorithm and comparisons
to traditional stochastic gradient descent.
Finally, we conclude with a brief review of related work and a discussion
of future research directions suggested by our results.

\section{Local minimax complexity}\label{sec:background}

In this section, we first establish notation and define a modulus of
continuity for a convex function $f$.  We then state our main result,
which links the local minimax complexity to this modulus of
continuity.

Let $\mathcal F$ be the collection of Lipschitz convex functions 
defined on a compact convex set $\C\subset\mathbb R^d$.
Given a function $f\in\F$, our goal is to find a minimum point,
$x_f^*\in\argmin_{x\in \C} f(x)$.
However, our knowledge about $f$ can only be gained through a 
first-order oracle. The oracle, upon being queried with $x\in\C$, returns 
$f'(x)+\xi$, where $f'(x)$ is a subgradient of $f$ at $x$ and 
$\xi\sim\normal(0,\sigma^2I_d)$.
When the oracle is queried with a non-differentiable point $x$ of $f$, 
instead of allowing the oracle to return an arbitrary subgradient at $x$,
we assume that it has a deterministic mechanism for producing $f'(x)$.
That is, when we query the oracle with $x$ twice, it should return two 
random vectors with the same mean $f'(x)$.
Such an oracle can be realized, for example, by taking
$f'(x) = \argmin_{z\in\partial f(x)}\|z\|$.

Consider optimization algorithms that make a total of $T$
queries to this first-order oracle, and let $\mathcal A_T$ be the collection
of all such algorithms. For $A\in\mathcal A_T$, denote by $\hat x_A$
the output of the algorithm. 
We write $\err(x,f)$ for a measure of error for using $x$ as
the estimate of the minimum point of $f\in\F$. In this notation,
the usual minimax complexity is defined as
\begin{equation}
R_T(\F) = \inf_{A\in\mathcal A_T}\sup_{f\in\mathcal F}\;\mathbb E_f\,\err(\hat x_A,f).
\end{equation}
Note that the algorithm $A$ queries
the oracle at up to $T$ points $x_t\in\C$ selected sequentially, and the
output $\hat x_A$ is thus a 
function of the entire sequence of random vectors $v_t \sim N(f'(x_t),
\sigma^2I_d)$ returned by the oracle. The expectation $\E_f$ denotes the
average with respect to this randomness (and any additional randomness
injected by the algorithm itself).
The minimax risk $R_T(\F)$ characterizes the hardness of the entire class $\mathcal F$. 
To quantify the difficulty of optimizing an individual function $f$, we consider the
following local minimax complexity, comparing $f$ to its hardest local alternative
\begin{equation}\label{eqn:localminimax}
R_T(f;\F) = \sup_{g\in\F}\inf_{A\in{\mathcal A_T}}\max_{h\in\{f,g\}}\,\mathbb E_h\,\err(\hat x_A,h).
\end{equation}

We now proceed to define a computational modulus of continuity that
characterizes the local minimax complexity.
Let $\mc{X}_f^*=\argmin_{x\in\C} f(x)$ be the set of minimum points of function $f$.
We consider $\err(x,f)=\inf_{y\in\mc{X}_f^*}\|x-y\|$ as our measure of error.
Define $d(f,g) = \inf_{x\in\mc{X}_f^*,y\in\mc{X}_g^*}\|x-y\|$ for $f,g\in\mathcal F$.
It is easy to see that $\err(x,f)$ and $d(f,g)$ satisfy the
\emph{exclusion inequality}
\begin{equation}\label{eqn:exclusion-inequality}
\err(x,f)<\frac{1}{2}d(f,g)\text{\quad implies\quad}\err(x,g)\geq \frac{1}{2}d(f,g).
\end{equation}
Next we define
\begin{equation}
\kappa(f,g) = \sup_{x\in\C}\|f'(x)-g'(x)\|
\end{equation}
where $f'(x)$ is the unique subgradient of $f$ that is returned as the mean 
by the oracle when queried with $x$.
For example, if we take $f'(x)=\argmin_{z\in\partial f(x)}\|z\|$, we have
\begin{equation}
\kappa(f,g) = \sup_{x\in\mc{C}}\|\Proj_{\partial f(x)}(0)-\Proj_{\partial g(x)}(0)\|
\end{equation}
where $\Proj_{B}(z)$ is the projection of $z$ to the set $B$.
Thus, $d(f,g)$ measures the dissimilarity between two functions 
in terms of the distance between their minimizers, 
whereas $\kappa(f,g)$ measures the dissimilarity by 
the largest separation between their subgradients at any given point.

Given $d$ and $\kappa$, we define the \emph{modulus of continuity} of $d$
with respect to $\kappa$ at the function $f$ by 
\begin{equation}\label{eqn:modulus}
\omega_f(\epsilon) = \sup\left\{d(f,g):g\in\F,\kappa(f,g)\leq\epsilon\right\}.
\end{equation}

\begin{figure}
\begin{center}
\input{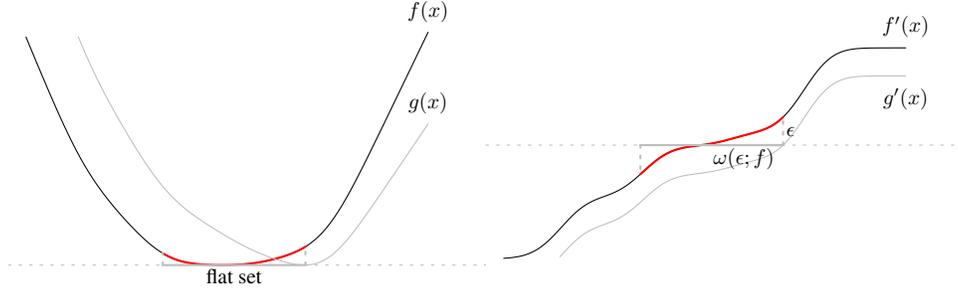}
\end{center}
\caption{Illustration of the flat set and the modulus of continuity. 
Both the function $f$ (left) and its derivative $f'$ (right) are shown (black curves), along with 
one of the many possible alternatives, $g$ and its derivative $g'$ (solid gray curves), 
that achieve the $\sup$ in the definition of $\omega_f(\epsilon)$.
The flat set contains all the points for which $|f'(x)|< \epsilon$,
and $\omega_f(\epsilon)$ is the larger half width of the flat set.}\label{fig:flatset}
\end{figure}

We now show how to calculate the modulus for some specific functions.
\begin{example}\label{exmp:one-dim-modulus}
Suppose that $f$ is a convex function on a one-dimensional interval
$\C\subset \mathbb R$. Then we have
\begin{equation}
\omega_f(\epsilon)=\sup\left\{\inf_{x\in\mc{X}_f^*}|x-y|:y\in\C,|f'(y)|<\epsilon\right\}.\label{eqn:rho}
\end{equation}
This essentially says that the modulus of continuity measures the
size (in fact, the larger half-width) of the the ``flat set'' where the magnitude of the subderivative
is smaller than $\epsilon$. See Figure~\ref{fig:flatset} for an illustration
Thus, for the class of symmetric functions 
$f(x)=\frac{1}{\pow}|x|^\pow$ over $\C=[-1,1]$, with $\pow>1$, 
\begin{equation}
\omega_f(\epsilon)=\epsilon^{\frac{1}{\pow-1}}.
\end{equation}
For the asymmetric case $f(x)=\frac{1}{\pow_l}|x|^{\pow_l}I(-1\leq x\leq 0)+\frac{1}{\pow_r}|x|^{\pow_r}I(0< x\leq 1)$
with $\pow_l, \pow_r>1$,
\begin{equation}
\omega_f(\epsilon) = \epsilon^{\frac{1}{\pow_l\lor \pow_r-1}}.
\end{equation}
That is, the size of the flat set depends on the flatter side of the function.
\end{example}

\subsection{Local minimax is characterized by the modulus}

We now state our main result linking  the local minimax complexity 
to the modulus of continuity. We say that the modulus of the continuity has \emph{polynomial growth} if 
there exists $\alpha>0$ and $\epsilon_0$, such that for any $c\geq 1$ and $\epsilon\leq\epsilon_0/c$
\begin{equation}
\omega_f(c\epsilon)\leq c^\alpha\omega_f(\epsilon).\label{eqn:polygrowth}
\end{equation}
Our main result below shows that the modulus of continuity characterizes the local minimax complexity of optimization of
a particular convex function, in a manner similar to how the modulus of continuity quantifies
the (local) minimax risk in a statistical estimation setting
\cite{CaiLow15,DonohoLiu87GR1,DonohoLiu90GR2}, relating the objective
to a geometric property of the function.

\begin{theorem}\label{thm:localminimax}
Suppose that $f\in\F$ and that $\omega_f(\epsilon)$ has polynomial growth.
Then there exist constants $C_1$ and $C_2$ independent of $T$ and $T_0>0$ such that for all $T>T_0$
\begin{equation}
C_1\,\omega_f\left(\frac{\sigma}{\sqrt{T}}\right)\leq R_T(f;\F)\leq C_2\,\omega_f\left(\frac{\sigma}{\sqrt{T}}\right).
\label{eq:thm}
\end{equation}
\end{theorem}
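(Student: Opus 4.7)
My plan is to prove the two inequalities separately, with both directions resting on the observation that $T$ queries to the Gaussian subgradient oracle reveal the underlying function only at resolution $\sigma/\sqrt{T}$ in the $\kappa$ metric, while the modulus $\omega_f$ translates $\kappa$-closeness into $d$-closeness of minimizers via the exclusion inequality~\eqref{eqn:exclusion-inequality}.

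For the \emph{lower bound} I would run a Le~Cam two-point reduction. Set $\eps=\sigma/\sqrt{T}$ and, by the definition of $\omega_f$, pick $g\in\F$ with $\kappa(f,g)\le\eps$ and $d(f,g)$ within a constant factor of $\omega_f(\eps)$. For any $A\in\A_T$ the oracle outputs are $v_t\sim\normal(h'(x_t),\sigma^{2}I_d)$ under hypothesis $h\in\{f,g\}$, and the KL chain rule for interactive protocols gives
\[
\dkl{\P_f}{\P_g}\;\le\;\sum_{t=1}^{T}\frac{\|f'(x_t)-g'(x_t)\|^{2}}{2\sigma^{2}}\;\le\;\frac{T\,\kappa(f,g)^{2}}{2\sigma^{2}}\;\le\;\tfrac{1}{2},
\]
so Pinsker yields $\tvnorm{\P_f-\P_g}\le 1/2$. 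Combining this with the exclusion inequality,
\[
\P_f\!\left(\err(\hat x_A,f)\ge\tfrac{d(f,g)}{2}\right)+\P_g\!\left(\err(\hat x_A,g)\ge\tfrac{d(f,g)}{2}\right)\;\ge\;1-\tvnorm{\P_f-\P_g}\;\ge\;\tfrac12,
\]
whence $\max_{h}\E_h\,\err(\hat x_A,h)\ge d(f,g)/16\gtrsim\omega_f(\eps)$. Taking the supremum over $g$ closes the lower bound.

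For the \emph{upper bound}, the algorithm in the local-minimax definition may depend on the pair $(f,g)$, so I would build an explicit tester $A_g$ for each alternative. If $\kappa(f,g)\le\eps$, then by definition $d(f,g)\le\omega_f(\eps)$, so outputting the midpoint of any pair $(x_f,x_g)\in\mc{X}_f^{*}\times\mc{X}_g^{*}$ attaining $d(f,g)$ has error at most $\omega_f(\eps)/2$ under both hypotheses. Otherwise, pick $x_*\in\C$ nearly achieving $\|f'(x_*)-g'(x_*)\|=\kappa(f,g)$, query the oracle $T$ times at $x_*$, and decide the hypothesis whose subgradient at $x_*$ is closer to the averaged response, outputting a minimizer of the chosen hypothesis. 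A one-dimensional Gaussian tail bound gives misclassification probability at most $\exp(-T\kappa(f,g)^{2}/(8\sigma^{2}))$, hence
\[
\E_h\,\err(\hat x_A,h)\;\le\;d(f,g)\,\exp\!\left(-\tfrac{T\kappa(f,g)^{2}}{8\sigma^{2}}\right),\qquad h\in\{f,g\}.
\]

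The main technical obstacle is turning this tail estimate into a uniform $O(\omega_f(\eps))$ bound over all alternatives $g$, and this is precisely where the polynomial-growth hypothesis~\eqref{eqn:polygrowth} enters. Writing $\kappa(f,g)=c\eps$ with $c>1$: when $c\eps\le\eps_0$, polynomial growth gives $d(f,g)\le\omega_f(c\eps)\le c^{\alpha}\omega_f(\eps)$, so the bound becomes $c^{\alpha}e^{-c^{2}/8}\,\omega_f(\eps)$, which is at most a universal constant times $\omega_f(\eps)$ since $\sup_{c\ge1}c^{\alpha}e^{-c^{2}/8}<\infty$. For $\kappa(f,g)>\eps_0$, the misclassification probability is at most $\exp(-T\eps_0^{2}/(8\sigma^{2}))$ while $d(f,g)$ is bounded by the diameter of $\C$, and this product falls below $\omega_f(\sigma/\sqrt{T})$ once $T\ge T_0$; this is what pins down the threshold $T_0$ appearing in the statement.
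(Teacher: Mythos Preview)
Your proposal is correct and follows essentially the same route as the paper: a Le~Cam two-point argument for the lower bound (the paper uses the Bretagnolle--Huber-type bound $\tvnorm{P-Q}\le 1-\exp(-\tfrac12\dkl{P}{Q})$ rather than Pinsker, but this is cosmetic), and for the upper bound the same ``query $T$ times at the point of maximal subgradient separation, then test'' construction, followed by the same three-regime case analysis in $\kappa(f,g)$ using polynomial growth in the middle range and the diameter bound in the tail. The only point you leave implicit is that the tail regime requires the polynomial-growth lower bound $\omega_f(\sigma/\sqrt{T})\gtrsim T^{-\alpha/2}$ to dominate the exponentially decaying term; the paper makes this explicit via $\omega_f(\eps)\ge(\eps/\eps_0)^\alpha\omega_f(\eps_0)$.
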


\begin{remark}
We use the error metric $\err(x,f)=\inf_{y\in\mc{X}^*_f}\|x-y\|$ here. 
For a given a pair $(\err,d)$ that satisfies the exclusion inequality \eqref{eqn:exclusion-inequality},
our proof technique applies to yield the corresponding lower bound.
For example, we could use $\err(x,f) = \inf_{y\in\mc{X}_f^*}|v^T(x-y)|$ for some vector $v$.
This error metric would be suitable when we wish to estimate $v^Tx_f^*$, 
for example, the first coordinate of $x_f^*$.
Another natural choice of error metric is $\err(x,f) = f(x)-\inf_{x\in\C}f(x)$,
with a corresponding distance $d(f,g) = \inf_{x\in\C}|f(x)-\inf_xf(x)+g(x)-\inf_xg(x)|$.
For this case, while the proof of the lower bound stays exactly the same,
further work is required for the upper bound, which is beyond the scope of this paper. 
\end{remark}


\begin{remark}
Although the theorem gives an upper bound for the local minimax complexity, 
this does not guarantee the existence of an algorithm that achieves
the local complexity for any function. Therefore, it is important to
design an algorithm that adapts  to this benchmark for each individual function. 
We solve this problem in the one-dimensional case in Section~\ref{sec:algor}.
\end{remark}

The proof of this theorem is given in the appendix. We now illustrate
the result with examples that verify the intuition that different functions should have
different degrees of difficulty for stochastic convex optimization.

\begin{example}
For the function $f(x)=\frac{1}{\pow}|x|^\pow$ with $x\in[-1,1]$ for $\pow>1$, we have
$R_T(f;\mathcal F) = O\bigl(T^{-\frac{1}{2(\pow-1)}}\bigr)$.
When $\pow=2$, we recover the strongly convex case, where the (global) minimax 
complexity is $O\bigl(1/\sqrt{T}\bigr)$ with respect to the error 
$\err(x,f)=\inf_{y\in\mc{X}_f^*}\|x-y\|$. We see a faster rate of convergence for $\pow<2$.  
As $\pow\to\infty$, we also see that the error fails to decrease as $T$ gets large.
This corresponds to the worst case for any Lipschitz convex function. 
In the
asymmetric setting with $f(x)=\frac{1}{\pow_l}|x|^{\pow_l}I(-1\leq x\leq 0)+\frac{1}{\pow_r}|x|^{\pow_r}I(0< x\leq 1)$
with $\pow_l,\pow_r>1$, we have $R_T(f;\mathcal F) = O(T^{-\frac{1}{2(\pow_l\lor \pow_r-1)}})$.
\end{example}

The following example illustrates that the local minimax complexity
and modulus of continuity are consistent with known behavior of
stochastic gradient descent for strongly convex functions.

\begin{example}
In this example we consider the error $\err(x,f) = \inf_{y\in\mc{X}_f^*}|v^T(x-y)|$
for some vector $v$, and let $f$ be an arbitrary convex function satisfying 
$\nabla^2 f(x^*_f) \succ 0$ with Hessian continuous around 
$x^*_f$. Thus the optimizer $x_f^*$ is unique.
If we define $g_w(x) = f(x) - w^T \nabla^2 f(x^*_f) x$, then $g_w(x)$ 
is a convex function with unique minimizer and
\begin{equation}
\fnmetric(f, g_w) 
 = \sup_x \left\{\norm{\nabla f(x) - (\nabla f(x) - \nabla^2 f(x^*_f) w )}\right\} 
 = \norm{\nabla^2 f(x^*_f) w}.
\end{equation}
Thus, defining $\delta(w) = x_f^* - x_{g_w}^*$,
\begin{equation}
  \modcont_f \left(\frac{\sigma}{\sqrt{T}} \right)
  \geq \sup_w\{|v^T \delta(w) | :
  \norm{ \nabla^2 f(x^*_f) w} \le \sfrac{\sigma}{\sqrt{T}}\}
  \geq \sup_u
  \left|v^T \delta \left( \frac{\sigma}{\sqrt{T}} 
  \nabla^2 f(x^*_f)^{-1} u \right) \right|.
\end{equation}
By the convexity of $g_w$, we know that $x_{g_w}^*$ satisfies 
$\nabla f( x_{g_w}^* ) - \nabla^2 f(x^*_f)^{-1} w = 0$,
and therefore by the implicit function theorem,
$x_{g_w}^* = x_f^* + w + o(\norm{w})$ as $w \rightarrow 0$.
Thus, 
 \begin{equation}
   \modcont_f \left(\frac{\sigma}{\sqrt{T}} \right) 
   \geq  \frac{\sigma}{\sqrt{T}}\norm{\nabla^2 f(x^*_f)^{-1}v}
   + o\left(\frac{\sigma}{\sqrt{T}}\right) ~~\mbox{as}~ T\rightarrow \infty.
 \end{equation}
In particular, we have the local minimax lower bound
\begin{equation}
\liminf_{T \to \infty} \sqrt{T}
  R_T(f;\F) \ge C_1 \sigma \norm{\nabla^2 f(x^*_f)^{-1}v}
\end{equation}
where $C_1$ is the same constant appearing in Theorem~\ref{thm:localminimax}.
This shows that the local minimax complexity 
captures the function-specific dependence on the constant in the
strongly convex case. Stochastic gradient descent with averaging is known
to adapt to this strong convexity constant \citep{ruppert:88,PolyakJu92,moulines:11}.

\end{example}

\subsection{Superefficiency}\label{sec:superefficiency}

Having characterized the local minimax complexity in terms of a computational
modulus of continuity, we would now like to show that there are consequences to
outperforming it at some function.  This will strengthen the case that the
local minimax complexity serves as a meaningful benchmark
to quantify the difficulty of optimizing any particular convex function.

Suppose that $f$ is any one-dimensional function such
that $\mc{X}_f^* = [x_l,x_r]$, which has as asymptotic expansion
around $\{x_l, x_r\}$ of the form
\begin{equation}
f(x_l-\delta) = f(x_l)+\lambda_l\delta^{k_l}+o(\delta^{k_l})
\text{~~and~~} 
f(x_r+\delta) = f(x_r)+\lambda_r\delta^{k_r}+o(\delta^{k_r})
\end{equation}
for $\delta>0$, some powers $k_l,k_r>1$, and constants
$\lambda_l,\lambda_r>0$.
The following result shows that if any algorithm significantly outperforms 
the local modulus of continuity on such a function, then it underperforms
the modulus on a nearby function.

\begin{proposition}
\label{prop:good-asymptotic-superefficiency}
Let $f$ be any convex function satisfying the asymptotic
expansion~\eqref{eqn:local-growth1} around its optimum.  Suppose that $A\in \A_T$ is any
algorithm that satisfies
\begin{equation}
\label{eq:superf}
\E_f\,\err(\hat x_A,f) \le \sqrt{\E_f\,\err(\hat x_A,f)^2} \le \delta_T\, \modcont_f\left(\frac{\sigma}{\sqrt{T}}\right),
\end{equation}
where $\delta_T < C_1$.  Define $g_{-1}(x) = f(x) - \epsilon_T x$ and
$g_1(x) = f(x) + \epsilon_T x$, where $\epsilon_T$ is given by $\epsilon_T  = \sqrt{\sfrac{\sigma^2\log\bigl(
    \frac{C_1}{\delta_T}\bigr)}{T}}$.
Then for some $g \in \{g_{-1}, g_1\}$, there exists $T_0$ such that $T \ge T_0$ implies
\begin{equation}
  \E_g\,\err(\hat x_A,g) \ge C\, \omega_g \left( \sqrt{\frac{\sigma^2\log \bigl(\sfrac{C_1}{\delta_T}\bigr)}{T}}
  \right)
\end{equation}
for some constant $C$ that only depends on $\pow = \pow_l \vee \pow_r$.  
\end{proposition}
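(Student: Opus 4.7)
The plan is a two-point Le Cam–style test pitting $f$ against the tilted alternatives $g_{\pm 1}$: the hypothesis \eqref{eq:superf} says the algorithm is ``too accurate'' at $f$, and a bounded-KL test will then force $A$ to err on one of the two alternatives, since their minimizers have been shifted away from $\mc X_f^\star$ on opposite sides.

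First I would compute the Kullback--Leibler divergence of the joint distributions of the $T$ oracle responses. Because $g_{\pm 1}'(x) = f'(x) \pm \epsilon_T$ pointwise and the noise is $\mathsf{N}(0,\sigma^2 I_d)$, each query contributes at most $\epsilon_T^2/(2\sigma^2)$ regardless of adaptivity, so by the chain rule $\dkl{\P_f^T}{\P_{g_{\pm 1}}^T} \le T\epsilon_T^2/(2\sigma^2) = \tfrac12 \log(C_1/\delta_T)$, giving $\exp(-\dkl{\P_f^T}{\P_{g_{\pm 1}}^T}) \ge \sqrt{\delta_T/C_1}$. Next I would use the asymptotic expansion around $x_l$ and $x_r$ to locate $x_{g_1}^\star = x_l - \delta_l + o(\delta_l)$ with $\delta_l := (\epsilon_T/(\lambda_l k_l))^{1/(k_l-1)}$ and symmetrically $x_{g_{-1}}^\star = x_r + \delta_r + o(\delta_r)$. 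A short direct calculation from the same expansion yields $\omega_{g_{\pm 1}}(\epsilon_T) \asymp \delta_{l/r}$ with constants depending only on $k = k_l \vee k_r$, and polynomial growth of $\omega_f$ combined with $\epsilon_T = \sqrt{\log(C_1/\delta_T)}\cdot\sigma/\sqrt T$ guarantees $\omega_f(\sigma/\sqrt T) = o(\omega_{g_{\pm 1}}(\epsilon_T))$, so that $\omega_f(\sigma/\sqrt T) \ll d(f,g_{\pm 1})$ once $T \ge T_0$.

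The key step is the testing argument. Let $A = \{\err(\hat x_A, f) < \omega_f(\sigma/\sqrt T)/2\}$. Chebyshev applied to \eqref{eq:superf} gives $\P_f(A^c) \le 4\delta_T^2$, while the exclusion inequality \eqref{eqn:exclusion-inequality} forces $A \subset \{\err(\hat x_A, g_{\pm 1}) \ge d(f,g_{\pm 1}) - \omega_f(\sigma/\sqrt T)/2\}$. Applying the Bretagnolle--Huber inequality to the event $A$ yields
\begin{equation*}
\P_{g_{\pm 1}}(A) \;\ge\; \tfrac12 \exp\bigl(-\dkl{\P_f^T}{\P_{g_{\pm 1}}^T}\bigr) - \P_f(A^c) \;\ge\; \tfrac12\sqrt{\delta_T/C_1} - 4\delta_T^2,
\end{equation*}
which is bounded below by a positive constant whenever $\delta_T$ is fixed strictly less than $C_1$. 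Combining with Markov,
\begin{equation*}
\E_{g_{\pm 1}} \err(\hat x_A, g_{\pm 1}) \;\ge\; \bigl(d(f, g_{\pm 1}) - \omega_f(\sigma/\sqrt T)/2\bigr) \cdot \P_{g_{\pm 1}}(A) \;\ge\; C\, \omega_{g_{\pm 1}}(\epsilon_T),
\end{equation*}
and choosing the alternative $g \in \{g_{-1}, g_1\}$ that yields the larger bound gives the stated claim.

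The main obstacle is the bookkeeping in the last step: tracking the constants from Bretagnolle--Huber, from the asymptotic expansion, and from the polynomial-growth relation between $\omega_f(\sigma/\sqrt T)$, $\delta_{l/r}$, and $\omega_{g_{\pm 1}}(\epsilon_T)$, to ensure that the final multiplicative constant can legitimately be described as depending only on $k = k_l \vee k_r$ (absorbing the $\sqrt{\delta_T/C_1}$ factor, which is bounded since $\delta_T < C_1$). One must also verify that choosing the ``correct'' alternative $g$ (the one aligned with the flatter side of $f$, where $k_l \vee k_r$ is achieved) is what makes both $d(f,g) \asymp \omega_g(\epsilon_T)$ and $\omega_f(\sigma/\sqrt T)$ negligible relative to it, for all $T$ above some threshold $T_0$ depending on the expansion constants $\lambda_{l/r}, k_{l/r}$ and on $\delta_T$.
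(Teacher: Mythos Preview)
Your two-point testing plan is the right idea, and the KL computation, the location of the tilted minimizers, and the relation between $d(f,g_{\pm 1})$ and $\omega_{g_{\pm 1}}(\epsilon_T)$ via the expansion are all fine. The gap is in the constant.

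Your route through Bretagnolle--Huber and Chebyshev gives
\[
\P_{g}(A)\;\ge\;\tfrac12\sqrt{\delta_T/C_1}-4\delta_T^2,
\]
and this factor multiplies the final lower bound. You note it is ``bounded since $\delta_T<C_1$'', but what you need is a \emph{lower} bound independent of $\delta_T$, and $\tfrac12\sqrt{\delta_T/C_1}\to 0$ as $\delta_T\to 0$. So your constant $C$ unavoidably depends on $\delta_T$, which contradicts the statement and, more importantly, kills the superefficiency interpretation: the regime $\delta_T\to 0$ is precisely the one where the proposition is meant to bite, and there your bound becomes trivial.

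The paper avoids this by using a second-moment change of measure with the $\chi$-divergence rather than a testing inequality. From the triangle inequality and Cauchy--Schwarz,
\[
\E_{g}\bigl[\dist(\hat x,\mc X_g^*)\bigr]\;\ge\;d(f,g)-\sqrt{\E_f\bigl[\dist(\hat x,\mc X_f^*)^2\bigr]\,\bigl(\dchii{P_g^T}{P_f^T}+1\bigr)}.
\]
For the Gaussian oracle, $\dchii{P_g^T}{P_f^T}+1\le\exp(T\epsilon_T^2/\sigma^2)$, so with $\epsilon_T^2=\frac{\sigma^2}{T}\log\frac{C_1}{\delta_T}$ the square root becomes $\delta_T\,\omega_f(\sigma/\sqrt T)\cdot\sqrt{C_1/\delta_T^{\,2}}$, in which the $\delta_T$'s cancel exactly. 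One is left with $d(f,g)$ minus a fixed fraction of $\omega_f(\sigma/\sqrt T)\le H(\epsilon_T)$, hence a universal multiplicative constant; only the conversion $H(\epsilon_T)\to\omega_g(\epsilon_T)$ via the expansion introduces a dependence on $k$. The cancellation is the point of the argument, and it does not survive the passage through Bretagnolle--Huber.
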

A proof of this result is given in the appendix, where it is derived as a
consequence of a more general statement.
We remark that while condition \eqref{eq:superf} involves the squared error
$\sqrt{\E_f\,\err(\hat x_A,f)^2}$, we expect that the 
result holds with only the weaker inequality on the absolute error $\E_f\,\err(\hat x_A,f)$.

It follows from this proposition that if an algorithm $A$ significantly outperforms the local minimax complexity 
in the sense that \eqref{eq:superf} holds 
for some sequence $\delta_T\to 0$ with $\liminf_Te^T\delta_T=\infty$,
then there exists a sequence of convex functions $g_T$ with $\kappa(f,g_T)\to 0$, such that
\begin{equation}\label{eqn:local-growth1}
\liminf_{T\to\infty}\frac{\E_{g_T}\,\err(\hat x_A,g_T)}{\omega_{g_T}
\left(\sqrt{\sfrac{\sigma^2\log\bigl(\frac{C_1}{\delta_T}\bigr)}{T}}\right)}
> 0.
\end{equation}

This is analogous to the phenomenon of superefficiency in classical
parametric estimation problems, where outperforming
the asymptotically optimal rate given by the Fisher
information implies worse performance at some other point in the
parameter space. In this sense, $\omega_f$ can be viewed as 
a computational analogue of Fisher information in the setting of convex
optimization.  We note that superefficiency has also been
studied in nonparametric settings \citep{BrownLo96}, and a similar
result was shown by \citet{CaiLow15} for local minimax estimation of
convex functions.

\section{An adaptive optimization algorithm}\label{sec:algor}

In this section, we show that 
a simple stochastic binary search algorithm achieves the local minimax
complexity in the one-dimensional case.

The general idea of the algorithm is as follows.
Suppose that we are given a budget of $T$ queries to the oracle. 
We divide this budget into $T_0=\lfloor T/E\rfloor$ queries over each
of $E = \lfloor r\log T\rfloor$ many rounds,
where $r>0$ is a constant to be specified later. 
In each round, we query the oracle $T_0$ times for the derivative at the mid-point
of the current interval.  Estimating the derivative by averaging 
over the queries, we proceed to the left half of the interval
if the estimated sign is positive, and to the right half
of the interval of the estimated sign is negative.
The details are given in Algorithm \ref{alg:binary-search}.

\begin{algorithm}
\normalsize
\caption{Sign testing binary search}\label{alg:binary-search}
\begin{algorithmic}
\STATE Input: $T$, $r$.
\STATE Initialize: $(a_0,b_0)$, $E=\lfloor r\log T\rfloor$, $T_0=\lfloor T/E\rfloor$.
\FOR {$e=1,\dots,E$}
\STATE Query $x_e=(a_e+b_e)/2$ for $T_0$ times to get $Z_{t}^{(e)}$ for $t=1,\dots,T_0$.
\STATE Calculate the average $\bar Z_{T_0}^{(e)}=\frac{1}{T_0}\sum_{t=1}^{T_0}Z_t^{(e)}$.
\STATE If $\bar Z_{T_0}^{(e)}>0$, set $(a_{e+1},b_{e+1})=(a_e,x_e)$.
\STATE If $\bar Z_{T_0}^{(e)}\leq 0$, set $(a_{e+1},b_{e+1})=(x_e,b_e)$.
\ENDFOR
\STATE Output: $x_E$.
\end{algorithmic}
\end{algorithm}

We will show that this algorithm adapts to the local minimax complexity
up to a logarithmic factor. 
First, the following result shows that the algorithm gets us close to the ``flat set''
of the function.

\begin{proposition}\label{prop:algorithm}
For $\delta\in(0,1)$, let $C_\delta=\sigma\sqrt{2\log(E/\delta)}$. Define
\begin{equation}
\mc{I}_\delta=\left\{y\in\dom(f):|f'(y)|<\frac{C_\delta}{\sqrt{T_0}}\right\}.
\end{equation}
Suppose that $(a_0,b_0)\cap\mc{I}_\delta\neq \emptyset$.
Then
\begin{equation}
\text{dist}(x_E,\mc{I}_\delta)\leq2^{-E}(b_0-a_0)
\end{equation}
with probability at least $1-\delta$.
\end{proposition}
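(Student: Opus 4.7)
The plan is to define a high-probability good event on which, at every round where the midpoint $x_e$ lies outside $\mc{I}_\delta$, the sign of the sample average $\bar Z_{T_0}^{(e)}$ matches the sign of $f'(x_e)$. Combined with the convexity of $f$, which forces $\mc{I}_\delta$ to be an interval, this should let me argue inductively that the bracketing interval $[a_e, b_e]$ always meets the closure $\bar{\mc{I}}_\delta$, so the midpoint $x_E$ of the final interval lies within half its length of $\mc{I}_\delta$.

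\textbf{The good event.} At round $e$ the oracle returns $Z_t^{(e)} = f'(x_e) + \xi_t$ with $\xi_t \simiid \normal(0,\sigma^2)$, so $\bar Z_{T_0}^{(e)} - f'(x_e) \sim \normal(0, \sigma^2/T_0)$. A standard Gaussian tail bound yields
\begin{equation*}
\P\left(|\bar Z_{T_0}^{(e)} - f'(x_e)| \geq \frac{C_\delta}{\sqrt{T_0}}\right) \leq 2\exp\left(-\frac{C_\delta^2}{2\sigma^2}\right) = \frac{2\delta}{E},
\end{equation*}
and a union bound over $e = 1,\ldots,E$ produces the event $\mc{G} = \{|\bar Z_{T_0}^{(e)} - f'(x_e)| < C_\delta/\sqrt{T_0}\text{ for all }e\}$ with probability at least $1 - \delta$ (the factor of $2$ can be absorbed into $C_\delta$ at a negligible cost to constants).

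\textbf{The inductive invariant.} Because $f$ is convex, the deterministically chosen subderivative $f'$ is monotone non-decreasing, so $\mc{I}_\delta$ is an interval and the hypothesis gives $[a_0, b_0] \cap \bar{\mc{I}}_\delta \neq \emptyset$. Working on $\mc{G}$, I would show inductively that $[a_e, b_e] \cap \bar{\mc{I}}_\delta \neq \emptyset$ at every round. For the inductive step, if $x_e \notin \mc{I}_\delta$ then $|f'(x_e)| \geq C_\delta/\sqrt{T_0}$ strictly dominates the noise on $\mc{G}$, so $\text{sign}(\bar Z_{T_0}^{(e)}) = \text{sign}(f'(x_e))$; monotonicity of $f'$ then places the entire set $\mc{I}_\delta$ on the side of $x_e$ that the algorithm retains. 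If instead $x_e \in \mc{I}_\delta$, then $x_e \in \bar{\mc{I}}_\delta$ is itself an endpoint of the next interval, so the invariant survives whichever branch is taken.

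\textbf{Conclusion and main obstacle.} After $E$ iterations, $b_E - a_E = 2^{-(E-1)}(b_0 - a_0)$ and $x_E$ is its midpoint, so the invariant forces $\text{dist}(x_E, \mc{I}_\delta) = \text{dist}(x_E, \bar{\mc{I}}_\delta) \leq (b_E - a_E)/2 = 2^{-E}(b_0 - a_0)$, which is the claim. I expect the main technical wart to sit at the boundary of $\mc{I}_\delta$: the strict inequality in its definition, together with possible jumps of $f'$, makes the case $x_e \in \mc{I}_\delta$ slightly awkward. Passing to the closure $\bar{\mc{I}}_\delta$ throughout — which shares the same distance function as $\mc{I}_\delta$ on $\reals$ under the proposition's hypothesis — absorbs this difficulty, reducing the remainder to the familiar binary search analysis paired with the Gaussian sign-test bound above.
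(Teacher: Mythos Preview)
Your approach is essentially the same as the paper's: both maintain the inductive invariant that the current interval meets $\mc{I}_\delta$, split on whether $x_e\in\mc{I}_\delta$, and use a Gaussian tail bound for the sign test when $x_e\notin\mc{I}_\delta$. The only organizational difference is that you fix a single global good event and union-bound, whereas the paper conditions round by round; your treatment of the boundary via the closure $\bar{\mc I}_\delta$ is a harmless refinement the paper simply elides.

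One small point: with the stated $C_\delta=\sigma\sqrt{2\log(E/\delta)}$ your two-sided bound yields $2\delta/E$ per round and hence probability $1-2\delta$, not $1-\delta$, and you cannot ``absorb the factor of $2$ into $C_\delta$'' since $C_\delta$ is fixed by the statement. The paper avoids this by using the one-sided Mill's-ratio bound $\P(\normal(0,1)>t)\le \frac{1}{t\sqrt{2\pi}}e^{-t^2/2}$, whose prefactor is at most $1$ here, giving exactly $\delta/E$ per round. If you want to keep your global-event formulation, simply observe that on the rounds that matter ($x_e\notin\mc{I}_\delta$) only a one-sided error can hurt you, and apply the same Mill's-ratio (or Chernoff) one-sided bound there.
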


This proposition tells us that after $E$ rounds of bisection, we are
at most a distance $2^{-E}(b_0-a_0)$ from the flat set $\mc{I}_\delta$.
In terms of the distance to the minimum point, we have
\begin{align}
\inf_{x\in\mc{X}_f^*} |x_E-x|&\leq 2^{-E}(b_0-a_0)+\sup\Bigl\{\inf_{x\in\mc{X}_f^*}|x-y|:y\in\mc{I}_\delta\Bigr\}.
\end{align}
If the modulus of continuity satisfies the polynomial growth
condition, we then obtain the following.
\begin{corollary}\label{cor:algorithm}
Let $\alpha_0>0$. Suppose $\omega_f$ satisfies the polynomial growth
condition \eqref{eqn:polygrowth} with constant $\alpha\leq\alpha_0$.
Let $r=\half\alpha_0$. Then with probability at least $1-\delta$ and for large enough $T$,
\begin{equation}
\inf_{x\in\mc{X}_f^*}|x_E-x|\leq \tilde C\kern.1ex \omega_f\left(\frac{\sigma}{\sqrt{T}}\right)
\end{equation}
where the term $\tilde C$ hides a dependence on $\log T$ and $\log(1/\delta)$.
\end{corollary}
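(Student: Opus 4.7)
The plan is to combine Proposition~\ref{prop:algorithm} with the one-dimensional formula for the modulus in Example~\ref{exmp:one-dim-modulus}, and then invoke the polynomial growth hypothesis to pass from $\omega_f(C_\delta/\sqrt{T_0})$ to $\omega_f(\sigma/\sqrt{T})$. The initial interval $(a_0,b_0)$ is assumed to contain the minimizer $x_f^*$, so $\mc{I}_\delta\cap(a_0,b_0)\ne\emptyset$ holds automatically (since $f'(x_f^*)=0$), and Proposition~\ref{prop:algorithm} applies. Together with the triangle inequality, it yields, with probability at least $1-\delta$,
$$
\inf_{x\in\mc{X}_f^*}|x_E - x| \;\le\; 2^{-E}(b_0-a_0) \;+\; \sup\Bigl\{\inf_{x\in\mc{X}_f^*}|x-y| : y\in \mc{I}_\delta\Bigr\}.
$$
By the characterization \eqref{eqn:rho}, the second term equals $\omega_f\bigl(C_\delta/\sqrt{T_0}\bigr)$, where $C_\delta=\sigma\sqrt{2\log(E/\delta)}$ and $T_0=\lfloor T/E\rfloor$.

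Second, I would relate $\omega_f(C_\delta/\sqrt{T_0})$ to $\omega_f(\sigma/\sqrt{T})$ by writing $C_\delta/\sqrt{T_0}=c_T\cdot(\sigma/\sqrt{T})$ and noting that with $E=\lfloor r\log T\rfloor$,
$$
c_T \;=\; \frac{C_\delta \sqrt{T}}{\sigma\sqrt{T_0}} \;=\; O\!\bigl(\sqrt{\log T\cdot\log(\log T/\delta)}\bigr),
$$
which grows only polylogarithmically in $T$. For $T$ large enough that $\sigma/\sqrt{T}\le \epsilon_0/c_T$, the polynomial growth condition \eqref{eqn:polygrowth} gives
$$
\omega_f\!\left(\frac{C_\delta}{\sqrt{T_0}}\right) \;\le\; c_T^{\alpha}\,\omega_f\!\left(\frac{\sigma}{\sqrt{T}}\right),
$$
and $c_T^{\alpha}$ is absorbed into the $\tilde C$ of the statement (this is where the $\log T$ and $\log(1/\delta)$ dependence in $\tilde C$ originates).

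Third, I would verify that the bisection remainder $2^{-E}(b_0-a_0)$ is negligible compared with $\omega_f(\sigma/\sqrt{T})$. Inverting \eqref{eqn:polygrowth} by the substitution $\epsilon=\epsilon_0/c$ gives $\omega_f(\epsilon_0/c)\ge c^{-\alpha}\omega_f(\epsilon_0)$ for all $c\ge 1$, so there is $c_1>0$ with $\omega_f(\sigma/\sqrt{T})\ge c_1 T^{-\alpha/2}$ for all large $T$. On the other hand, $2^{-E}(b_0-a_0)=O(T^{-r\log 2})$. Because $r$ is chosen as a fixed multiple of $\alpha_0$ and $\alpha\le\alpha_0$, this term is dominated (up to a constant) by $c_1 T^{-\alpha/2}$ and hence by the modulus term. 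Assembling the two bounds then yields $\inf_{x\in\mc{X}_f^*}|x_E - x|\le \tilde C\,\omega_f(\sigma/\sqrt{T})$ with $\tilde C$ depending only on $\log T$ and $\log(1/\delta)$, as claimed.

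The main obstacle is the bookkeeping in the last step: one must choose $r$ (as a function of $\alpha_0$) large enough that $2^{-E}=T^{-r\log 2}$ decays at least as fast as the polynomial lower envelope $T^{-\alpha/2}$ forced by the growth condition, while simultaneously keeping $T_0=\lfloor T/E\rfloor$ large so that $C_\delta/\sqrt{T_0}$ remains inside the domain where \eqref{eqn:polygrowth} is valid. All other steps are essentially identifications of terms using results already in the paper.
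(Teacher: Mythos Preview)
Your proposal is correct and follows essentially the same three-step structure as the paper's proof: invoke Proposition~\ref{prop:algorithm} plus the triangle inequality, identify the flat-set term as $\omega_f(C_\delta/\sqrt{T_0})$ via Example~\ref{exmp:one-dim-modulus} and scale it using polynomial growth, and then control the bisection remainder $2^{-E}(b_0-a_0)$ against the lower envelope $\omega_f(\sigma/\sqrt{T})\gtrsim T^{-\alpha/2}$ obtained by inverting the growth condition. You are in fact slightly more careful than the paper about the $\log 2$ factor in $2^{-E}=O(T^{-r\log 2})$; the paper tacitly writes $2^{-E}\le 2T^{-r}$ (effectively taking $\log=\log_2$), so the bookkeeping concern you flag about whether $r=\tfrac12\alpha_0$ is large enough is a genuine constant-choice issue that the paper simply absorbs.
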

The proofs of these results are given in the appendix.

\subsection{Simulations showing adaptation to the benchmark}

We now demonstrate the performance of the stochastic binary search algorithm,
making a comparision to stochastic gradient descent.
For the stochastic gradient descent algorithm, we perform $T$ steps of update
\begin{equation}
x_{t+1}=x_t-\eta(t)\cdot \hat g(x_t)
\end{equation}
where $\eta(t)$ is a stepsize function, 
chosen as either $\eta(t)=\frac{1}{t}$ or $\eta(t)=\frac{1}{\sqrt{t}}$.
We first consider the following setup with symmetric functions $f$:
\begin{enumerate}
\item The function to optimize is
  $f_\pow(x)=\frac{1}{\pow}|x-x^*|^\pow$ for $\pow=\frac{3}{2},\,2$ or
  $3$.
\item The minimum point $x^*\sim\text{Unif}(-1,1)$ is selected
  uniformaly at random over the interval.
\item The oracle returns the derivative at the query point with additive
  $N(0,\sigma^2)$ noise, $\sigma=0.1$.
\item The optimization algorithms know \emph{a priori} that the
  minimum point is inside the interval $(-2,2)$. Therefore, the binary
  search starts with interval $(-2,2)$ and the stochastic gradient
  descent starts at $x_0\sim\text{Unif}(-2,2)$ and project the query
  points to the interval $(-2,2)$.
\item We carry out the simulation for values of $T$ on a logarithmic
  grid between 100 and 10{,}000. For each 
  setup, we average the error $|\hat x-x^*|$ over 1{,}000 runs.
\end{enumerate}
The simulation results are shown in the top 3 panels of Figure~\ref{fig:simulation}.
Several properties predicted by our theory are apparent from the simulations.
First, the risk curves for the stochastic binary search algorithm
parallel the gray curves.
This indicates that the optimal rate of convergence is achieved. Thus,
the stochastic binary search adapts to the curvature of different functions
and yields the optimal local minimax complexity, as given by our benchmark.
Second, the stochastic gradient descent algorithms with stepsize $1/t$ achieve the optimal rate when
$\pow=2$, but not when $\pow=3$; with stepsize $1/\sqrt{t}$ SGD gets close to the
optimal rate when $\pow=3$, but not when $\pow=2$.
Neither leads to the faster rate when $\pow=\frac{3}{2}$. This is as
expected, since the stepsize needs to be adapted to the curvature at
the optimum in order to achieve the optimal rate.

Next, we consider a set of asymmetric functions. Using the same setup
as in the symmetric case, we consider the functions of the form 
$f(x)=\frac{1}{k_l}|x-x^*|^{k_l}I(x-x^*\leq 0)+\frac{1}{k_r}|x-x^*|^{k_r}I(x-x^*>0)$,
for exponent pairs $(\pow_1,\pow_2)$ chosen to be $(\frac{3}{2},2)$, $(\frac{3}{2},3)$ and $(2,3)$.
The simulation results are shown in the bottom three panels of Figure~\ref{fig:simulation}.
We observe that the stochastic binary search once again achieves 
the optimal rate, which is determined by the flatter side of the function,
that is, the larger of $\pow_l$ and $\pow_r$.

\begin{figure}
\begin{center}
\input{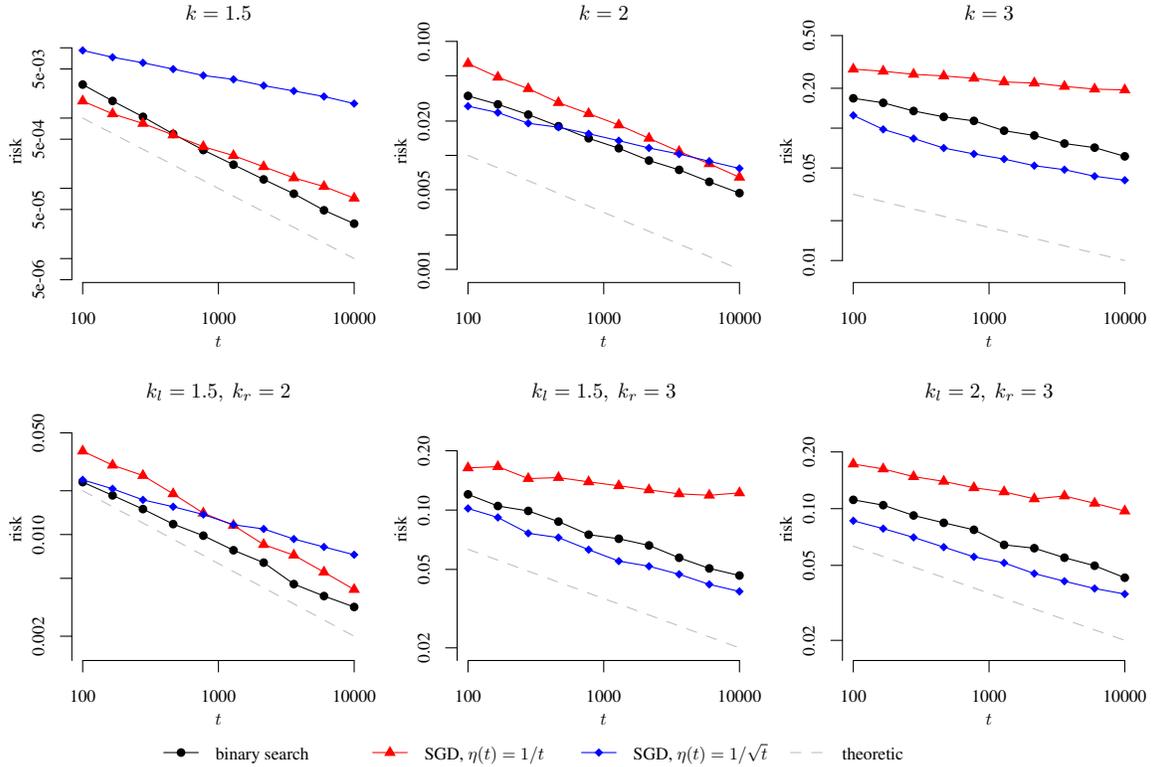}
\vskip-20pt
\caption{Simulation results: Averaged risk versus number of queries $T$.
The black curves correspond to the risk of  the stochastic binary search algorithm.
The red and blue curves are for the stochastic gradient descent methods, red for 
stepsize $1/t$ and blue for $1/\sqrt{t}$.
The dashed gray lines indicate the optimal convergence rate.
Note that the plots are on a log-log scale.
The plots on the top panels are for the symmetric cases $f(x)=\frac{1}{\pow}|x-x^*|^\pow$;
the lower plots are for the asymmetric cases.}\label{fig:simulation}
\end{center}
\end{figure}

\section{Related work and future directions}

In related recent work, \citet{ramdas2013optimal} study minimax
complexity for the class of Lipschitz convex functions that satisfy
$f(x)-f(x_f^*)\geq\frac{\lambda}{2}\|x-x_f^*\|^k$. They show that the
minimax complexity under the function value error is of the order
$T^{-\frac{k}{2(k-1)}}$.
\citet{iouditski2014primal} also consider minimax complexity for the
class of $k$-uniformly convex functions for $k>2$.  They give an
adaptive algorithm based on stochastic gradient descent that achieves
the minimax complexity up to a logarithmic factor. Connections with
active learning are developed in \cite{ramdas2013algorithmic}, with related ideas
appearing in \cite{castro2008minimax}.  Adaptivity in this line of
work corresponds to the standard notion in statistical estimation,
which seeks to adapt to a large subclass of a parameter space.  In
contrast, the results in the current paper quantify the difficulty of
stochastic convex optimization at a much finer scale, as the benchmark
is determined by the specific function to be optimized.

The stochastic binary search algorithm presented in Section
\ref{sec:algor}, despite being adaptive, has a few drawbacks.  It
requires the modulus of continuity of the function to satisfy
polynomial growth, with a parameter $\alpha$ bounded away from 0.
This rules out cases such as $f(x)=|x|$, which should have an error
that decays exponentially in $T$; it is of interest to handle this
case as well.  It would also be of interest to construct adaptive
optimization procedures tuned to a fixed numerical precision. Such
procedures should have different running times depending on the
hardness of the problem.  Progress on both problems has been made, and
will be reported elsewhere.

Another challenge is to remove the logarithmic factors appearing in
the binary search algorithm developed in Section~\ref{sec:algor}.  In
one dimension, stochastic convex optimization is intimately related to
a noisy root finding problem for a monotone function taking values in
$[-a,a]$ for some $a > 0.$ \citet{karp2007noisy} study optimal
algorithms for such root finding problems in a discrete setting. A
binary search algorithm that allows backtracking is proposed, which
saves log factors in the running time. It would be interesting to
study the use of such techniques in our setting.

Other areas that warrant study involve the dependence on dimension.
The scaling with dimension of the local minimax complexity and modulus
of continuity is not fully revealed by the current analysis. Moreover,
the superefficiency result and the adaptive algorithm presented here
are only for the one-dimensional case.  We note that a form of
adaptive stochastic gradient algorithm for the class of uniformly
convex functions in general, fixed dimension is developed in
\cite{iouditski2014primal}.  

Finally, a more open-ended direction is
to consider larger classes of stochastic optimization problems.  For
instance, minimax results are known for functions of the form
$f(x):=\mathbb E\, F(x;\xi)$ where $\xi$ is a random variable and
$x\mapsto F(x;\xi)$ is convex for any $\xi$, when $f$ is twice
continuously differentiable around the minimum point with positive
definite Hessian. However, the role of the local geometry is not well
understood. It would be interesting to further develop the local
complexity techniques introduced in the current paper, to gain insight
into the geometric structure of more general stochastic optimization
problems.

\begin{appendix}
\section{Proof of Theorem \ref{thm:localminimax}}

\subsection{Lower bound}
For a function $f \in \mc{F}$, let $P_f$ denote
the distribution of stochastic gradients observable by an estimation
scheme $\what{x}$, and let $P_f^T$ denote the distribution of
$T$ sequentially queried stochastic gradients for $f$.
We define the pairwise minimax
risk for optimization of a pair of function $f$ and $g$ by
\begin{equation}
  \label{eqn:pairwise-complexity}
  R_T(f, g) \defeq
  \inf_{A\in\mathcal A_T} \max\left\{
  \E_f\, \err(\what{x}_A, f),
  \E_g\, \err(\what{x}_A, g)
  \right\},
\end{equation}
and the local minimax lower bound can be written as
\begin{equation}
  \label{eqn:local-complexity}
  R_T(f;\F) \defeq \sup_{g \in \mc{F}} R_T(f, g).
\end{equation}

Let us show how the modulus of continuity 
gives a lower bound.
We first state a lemma.
\begin{lemma}
  \label{lemma:le-cam}
  Let $f, g$ be arbitrary convex functions and $\fndist$ satisfy the
  exclusion inequality~\eqref{eqn:exclusion-inequality}. Then
  \begin{equation}
    R_T(f, g)
    \ge \frac{\fndist(f, g)}{4}
    \left(1 - \tvnorm{P_f^T - P_g^T}\right).
  \end{equation}
\end{lemma}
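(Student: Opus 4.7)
The plan is to execute a standard Le Cam two-point / testing reduction, which is the natural mechanism for converting the exclusion inequality~\eqref{eqn:exclusion-inequality} into a lower bound. Fix an arbitrary algorithm $A \in \mathcal{A}_T$ with output $\hat x_A$, and note that $\hat x_A$ is a measurable function of the length-$T$ query-response trajectory (together with any internal algorithmic randomness, which we either absorb into the probability space or condition on and handle at the end). Consequently, any event defined in terms of $\hat x_A$ can be pulled back to a measurable event under either $P_f^T$ or $P_g^T$, so total variation bounds will apply.

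The key step is to define the indicator event
\begin{equation}
E = \left\{ \err(\hat x_A, f) < \tfrac{1}{2}\fndist(f,g) \right\}.
\end{equation}
On $E$, the exclusion inequality~\eqref{eqn:exclusion-inequality} forces $\err(\hat x_A, g) \ge \tfrac{1}{2}\fndist(f,g)$, while on $E^c$ we trivially have $\err(\hat x_A, f) \ge \tfrac{1}{2}\fndist(f,g)$. Markov's inequality then gives
\begin{equation}
\E_f\,\err(\hat x_A, f) \ge \tfrac{1}{2}\fndist(f,g)\,P_f^T(E^c), \qquad
\E_g\,\err(\hat x_A, g) \ge \tfrac{1}{2}\fndist(f,g)\,P_g^T(E).
\end{equation}

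Bounding the max by the average and summing yields
\begin{equation}
\max\bigl\{\E_f\,\err(\hat x_A, f),\, \E_g\,\err(\hat x_A, g)\bigr\}
\ge \tfrac{1}{4}\fndist(f,g)\bigl(P_f^T(E^c) + P_g^T(E)\bigr).
\end{equation}
The final ingredient is the elementary identity $P_f^T(E^c) + P_g^T(E) = 1 - (P_f^T(E) - P_g^T(E)) \ge 1 - \tvnorm{P_f^T - P_g^T}$, which follows from the variational definition of total variation. Substituting and taking the infimum over $A \in \mathcal{A}_T$ delivers the claimed inequality.

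The only subtle point — and thus the ``hard part'' of an otherwise routine argument — is the measurability / coupling issue: the event $E$ is defined through $\hat x_A$, whose law depends on the sequentially adaptive interaction between $A$ and the oracle, so one must be careful that $P_f^T$ and $P_g^T$ really are the correct measures on the same underlying trajectory space against which $E$ is tested. Since the algorithm's query at step $t$ is a deterministic (or independently randomized) function of the previous $t{-}1$ oracle responses, the joint law of the responses under $f$ versus $g$ is well-defined as a product-of-conditionals on $(\reals^d)^T$, and the TV bound applies directly. No further assumptions on $f, g$ beyond membership in $\F$ are needed, which is consistent with the lemma's generality.
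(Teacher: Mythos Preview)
Your proposal is correct and follows essentially the same Le Cam two-point reduction as the paper: Markov's inequality, the exclusion inequality to link the error events for $f$ and $g$, the max-by-average bound, and the variational characterization of total variation. The only cosmetic difference is ordering---the paper first bounds the max of the two tail probabilities and then invokes exclusion, whereas you invoke exclusion up front to define $E$---but the logic and the resulting constant $\fndist(f,g)/4$ are identical.
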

\begin{proof}
  Temporarily hiding the number of iterations $T$ for simplicity,
  we have by Markov's inequality that
  \begin{align}
    \max&\left\{\E_f\,\err(\what{x}_A, f),
    \E_g\,\err(\what{x}_A, g)\right\}\\
    & \ge \half\fndist(f, g) \max\left\{
    P_f(\err(\what{x}_A, f) \ge \half\fndist(f, g)),
    P_g(\err(\what{x}_A, g) \ge \half\fndist(f, g))
    \right\}.
  \end{align}
  Now, we apply an essentially standard reduction of estimation to
  testing, because we have
  \begin{align}
    \lefteqn{2 \max\left\{
      P_f(\err(\what{x}_A, f) \ge \half\fndist(f, g)),
      P_g(\err(\what{x}_A, g) \ge \half\fndist(f, g))
      \right\}} \\
    & \qquad \ge 
    P_f(\err(\what{x}_A, f) \ge \half\fndist(f, g))
    + P_g(\err(\what{x}_A, g) \ge \half\fndist(f, g)) \\
    & \qquad = 1 - P_f(\err(\what{x}_A, f) < \half\fndist(f, g))
    + P_g(\err(\what{x}_A, g) \ge \half\fndist(f, g)) \\
    & \qquad \ge 1 - P_f(\err(\what{x}_A, g) \ge \half\fndist(f, g))
    + P_g(\err(\what{x}_A, g) \ge \half\fndist(f, g)),
  \end{align}
  where in the last line we have used the exclusion inequality
  to see that $\err(\what{x}_A, f) < \half\fndist(f, g)$ implies
  $\err(\what{x}_A, g) \ge \half\fndist(f, g)$ so that
  \begin{equation}
    P_f(\err(\what{x}_A, f) < \half\fndist(f, g))
    \le 
    P_f(\err(\what{x}_A, g) \ge \half\fndist(f, g)).
  \end{equation}
  Thus, we find that
  \begin{equation}
    \frac{4}{\fndist(f, g)} \max\left\{\E_f\,\err(\what{x}_A, f),
    \E_g\,\err(\what{x}_A, g)\right\}
    \ge \inf_S \left\{1 - P_f^T(S) + P_g^T(S)\right\}
    = 1 - \tvnorm{P_f^T - P_g^T},
  \end{equation}
  which yields the lemma.
\end{proof}

Now we can prove a minimax lower bound. Let $Y_i$ be the $i$th observed
gradient, where $P_f(Y_i \mid Y_{1:i-1})$ denotes the conditional
distribution of $Y_i$ under the oracle for function $f$. We have by
the chain rule that
\begin{align}
   \dkl{P_f^T}{P_g^T}
  = \sum_{i = 1}^T \E_{f}\left[\dkl{P_f(Y_i \mid Y_{1:i-1})}{
      P_g(Y_i \mid Y_{1:i-1})}\right].
\end{align}
It is no loss of generality to assume that the $i$th gradient query point
$x_i$ is measureable with respect to $Y_{1:i-1}$ (this follows because if a
randomized algorithm does well in expectation, there is at least one
realization of its randomness that has small risk, so we can just take that
realization and assume the procedure is deterministic). Using that we have a
Gaussian oracle, we have
\begin{align}
  \dkl{P_f(Y_i \mid Y_{1:i-1})}{
    P_g(Y_i \mid Y_{1:i-1})}
  & = \dkl{\normal(f'(x_i), \sigma^2 I_{d \times d})}{\normal(g'(x_i),
    \sigma^2 I_{d \times d})} \\
  & = \frac{1}{2\sigma^2}
  \norm{f'(x_i) - g'(x_i)}^2
  \le \frac{1}{2\sigma^2} \fnmetric(f, g)^2.
\end{align}
Noting the not completely standard upper bound
\begin{equation}
  \tvnorm{P_f^T - P_g^T}
  \le 1 - \exp\left(-\half \dkl{P_f^T}{P_g^T}\right)
\end{equation}
on the variation distance (see \citet[Lemma 2.6]{Tsybakov09}), we also have
by Lemma~\ref{lemma:le-cam} that
\begin{equation}
  R_T(f, g) \ge \frac{\fndist(f, g)}{4}
  \exp\left(-\frac{T}{4\sigma^2} \fnmetric(f, g)^2\right).
  \label{eqn:exponential-risk-bound}
\end{equation}

Consider the collection of functions
\begin{equation}
  \mc{F}_T \defeq \left\{g \in \mc{F} :
  \fnmetric(f, g)^2 \le \frac{\sigma^2}{T}
  \right\}.
\end{equation}
Certainly this collection is non-empty (it includes $f$).
For any $\epsilon > 0$, there must exist some $g \in \mc{F}_T$ such that
$d(f, g) \ge (1 - \epsilon) \modcont_f(1 / \sqrt{T})$. Let
$g_T$ denote such a $g$. Then we have
\begin{equation}
  R_T(f) \ge R_T(f, g_T)
  \ge \frac{\fndist(f, g_T)}{4} e^{-\frac{1}{4}}
  \ge  \frac{1 - \epsilon}{4}e^{-\frac{1}{4}}
  \modcont_f\left(\frac{\sigma}{\sqrt{T}}
  \right).
\end{equation}
We have 
  \begin{equation}
    R_T(f) \ge
    \frac{1}{4e^{1/4}} \modcont_f\left(\frac{\sigma}{\sqrt{T}}\right)
    \ge \frac{3}{16} \modcont_f\left(\frac{\sigma}{\sqrt{T}}\right).
  \end{equation}

\subsection{Upper bound}

Suppose that we have two functions $f_{-1},f_1\in\mathcal F$. Let 
\begin{equation}\label{eqn:xdag}
x^\dag = \argmax_{x\in \C}\left\{\|f_{-1}'(x)-f_{1}'(x)\|\right\}
\end{equation}
be the point at which the two functions differ the most in terms of the subgradients. 
Let $\theta\in\{-1,1\}$ be the parameter.
Consider an algorithm that queries the oracle with $x^\dag$ for $T$ times. 
Let $Z_t$ be the response from the oracle at time $t$.
Let
\begin{equation}
W=\frac{1}{\sqrt{T}}\sum_{t=1}^TZ_t-\frac{\sqrt{T}}{2}(f'_1(x^\dag)+f'_{-1}(x^\dag))
\end{equation}
With the normality assumption on the noise, we have
\begin{equation}
W\sim N(\theta\gamma_T,\sigma^2I)
\end{equation}
where
\begin{equation}
\gamma_T = \frac{\sqrt{T}}{2}\left(f'_1(x^\dag)-f'_{-1}(x^\dag)\right).
\end{equation}
Then we construct
\begin{equation}
\overline W = \|\gamma_T\|^{-1}\gamma_T^\T W\sim N(\theta\|\gamma_T\|,\sigma^2),
\end{equation}
which is a sufficient statistic for the problem of estimating $\theta$.
Based on $\overline W$ we can obtain an estimate $\hat\theta$ of $\theta$, 
and let the output of our algorithm be
\begin{equation}
\hat x_T = \frac{x_1^*+x_{-1}^*}{2}+\hat\theta\frac{x_1^*-x_{-1}^*}{2}
\end{equation}
where $x_1^*\in\mc{X}_{f_1}^*$ and $x_{-1}^*\in\mc{X}_{f_{-1}}^*$ satisfy
$\|x_1-x_{-1}\|=\inf_{x\in\mc{X}_{f_1}^*}\inf_{y\in\mc{X}_{f_{-1}}^*}\|x-y\|$.
It then follows 
\begin{align}
\inf_{A\in\mathcal A_T}\max_{\theta=\pm 1}\mathbb E_\theta\|\hat x_A-x_\theta^*\|&\leq\max_{\theta=\pm 1}\mathbb E_\theta\|\hat x_T-x_\theta^*\|\\
&\leq \frac{1}{2}\|x_{1}^*-x_{-1}^*\|\inf_{\hat\theta}\sup_{\theta=\pm1}\mathbb E_\theta|\hat\theta-\theta|\\
&=\frac{1}{2}\|x_{1}^*-x_{-1}^*\|\|\gamma_T\|^{-1}\lambda (\|\gamma_T\|,\sigma)\label{eqn:twopoint}
\end{align}
where $\lambda(\tau,\sigma)=\inf_{\hat\mu}\sup_{\mu=\pm\tau}\mathbb E_\mu|\hat\mu-\mu|$ is the minimax ($\ell_1$) risk of estimating the mean of $Z\sim N(\tau,\sigma^2)$ for the class $\mu\in\{-\tau,\tau\}$.

Now take $f_{-1}=f$ and $f_1=g$. Note that
$\|\gamma_T\| = \frac{\sqrt{T}}{2}\kappa(f,g)$.
From \eqref{eqn:twopoint} we have
\begin{align}
R_T(f;\mathcal F)&=\sup_{g\in\mathcal F}\inf_{A\in\mathcal A_T}\max\left\{\mathbb E_f\|\hat x_T-x_f^*\|,\mathbb E_g\|\hat x_T-x_g^*\|\right\}\\
&\leq\frac{1}{2}\sup_{\|\gamma_T\|}\sup_{g\in\mathcal F:\kappa(f,g)=\frac{2\|\gamma_T\|}{\sqrt{T}}}\|x_f^*-x_g^*\|\|\gamma_T\|^{-1}\lambda(\|\gamma_T\|,\sigma)\\
&\leq\frac{1}{2}\sup_\tau\omega_f\left(\frac{2\tau}{\sqrt{T}}\right)\tau^{-1}\lambda(\tau,\sigma).
\end{align}
We have the following bound derived from \cite{donoho1994statistical}
\begin{equation}
\lambda(\tau,\sigma)\leq \tau\exp\left(-\frac{\tau^2}{4\sigma^2}\right),
\end{equation}
which yields
\begin{equation}
R_T(f;\mathcal F)\leq\frac{1}{2}\sup_\tau\omega\left(\frac{2\tau}{\sqrt{T}}\right)\exp\left(-\frac{\tau^2}{4\sigma^2}\right).
\end{equation}
To upper bound the last quantity, we write
\begin{align}
\sup_\tau\omega\left(\frac{2\tau}{\sqrt{T}}\right)\exp\left(-\frac{\tau^2}{4\sigma^2}\right)\leq\max\Bigg\{&\sup_{\tau\leq r}\psi(\tau),
\sup_{r<\tau\leq\half\epsilon_0\sqrt{T}}\psi(\tau),
\sup_{\tau>\half\epsilon_0\sqrt{T}}\psi(\tau)\Bigg\}
\end{align}
for some $r>0$, where $\psi(\tau)=\omega\left(\frac{2\tau}{\sqrt{T}}\right)\exp\left(-\frac{\tau^2}{4\sigma^2}\right)$.
We bound the three terms separately by
\begin{equation}
\sup_{\tau\leq r}\omega\left(\frac{2\tau}{\sqrt{T}}\right)\exp\left(-\frac{\tau^2}{4\sigma^2}\right)\leq \omega\left(\frac{2r}{\sqrt{T}}\right),
\end{equation}
and
\begin{align}
&\sup_{r<\tau\leq\half\epsilon_0\sqrt{T}}\omega\left(\frac{2\tau}{\sqrt{T}}\right)\exp\left(-\frac{\tau^2}{4\sigma^2}\right)\\
&=\sup_{s\geq 1\,\&\,\frac{2sr}{\sqrt{T}}\leq\epsilon_0}\omega\left(\frac{2sr}{\sqrt{T}}\right)\exp\left(-\frac{s^2r^2}{4\sigma^2}\right)\\
&\leq \sup_{s\geq 1}s^\alpha\omega\left(\frac{2r}{\sqrt{T}}\right)\exp\left(-\frac{s^2r^2}{4\sigma^2}\right)\\
&\leq \left(\frac{\sqrt{2\alpha}\sigma}{r}\right)^\alpha \omega\left(\frac{2r}{\sqrt{T}}\right)
\end{align}
since $\omega_f$ satisfies $\omega_f(c\epsilon)\leq c^\alpha\omega_f(\epsilon)$ for $c>1$, $c\epsilon\leq\epsilon_0$ and some $\alpha>0$, and
\begin{align}
\sup_{\tau>\frac{1}{2}\epsilon_0\sqrt{T}}\omega\left(\frac{2\tau}{\sqrt{T}}\right)\exp\left(-\frac{\tau^2}{4\sigma^2}\right)
&\leq\text{diam}(\C)\exp\left(-\frac{\epsilon_0^2T}{16\sigma^2}\right)
\end{align}
Setting $r=\sigma/2$ and noting that $\omega_f(\epsilon)\geq \epsilon^{\alpha}\frac{\omega_f(\epsilon_0)}{\epsilon_0^\alpha}$, we have that there exists $T_0>0$ such that for all $T\geq T_0$
\begin{equation}
R_T(f;\mathcal F)\leq C\omega_f\left(\frac{\sigma}{\sqrt{T}}\right)
\end{equation}
where $C=\frac{1}{2}\max\{1,\left(8\alpha\right)^{\frac{\alpha}{2}}\}$.

\section{Proofs for superefficiency results}

We begin by recalling the following results about properties of the
subdifferential of a convex function $f$ and its Fenchel conjugate
\begin{equation}
  f^*(y) \defeq \sup_x \left\{y^Tx - f(x) \right\},
\end{equation}
including duality between the subdifferential sets
$\partial f$ and $\partial f^*$, increasing gradients, and continuous
differentiability.

\begin{lemma}[\citet{HiriartUrrutyLe93ab}]
  \label{lemma:subgradients}
  Let $f$ be a closed convex function. Then
  \begin{equation}
    x \in \partial f^*(y)
    ~~ \mbox{if and only if} ~~
    y \in \partial f(x).
  \end{equation}
  Additionally, subgradient sets are increasing in the sense that
  \begin{equation}
    s_1 \in \partial f(x_1) ~ \mbox{and} ~
    s_2 \in \partial f(x_2)
    ~~ \mbox{implies} ~~
    \<s_1 - s_2, x_1 - x_2\> \ge 0.
  \end{equation}
  Lastly, if $f : \R \to \R$ is strictly convex on an interval
  $[x_l, x_r]$, then $f^*$ is continuously differentiable on
  the interval $[\inf \{s : s \in \partial f(x_l)\},
    \sup \{s : s \in \partial f(x_r) \}]$.
\end{lemma}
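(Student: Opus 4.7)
The plan is to handle the three assertions in turn, each drawing on standard Fenchel--Young machinery; since all three are textbook results, I would aim for a short self-contained argument rather than invoking further convex-analytic black boxes.

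First, for the duality statement, I would work from the Fenchel--Young inequality $f(x) + f^*(y) \ge \<x, y\>$ and its equality case. The key observation is that $x \in \partial f(x)$ iff $f(z) \ge f(x) + \<y, z - x\>$ for all $z$, which rearranges to $\<y, x\> - f(x) \ge \<y, z\> - f(z)$ for all $z$, i.e.\ $x$ attains the supremum defining $f^*(y)$, so $f^*(y) = \<y, x\> - f(x)$. Running this chain symmetrically for $f^*$ (noting $f = f^{**}$ since $f$ is closed convex) shows that the equality $f(x) + f^*(y) = \<x, y\>$ is equivalent to both $y \in \partial f(x)$ and $x \in \partial f^*(y)$, which gives the biconditional.

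Second, the monotonicity of subgradients follows immediately from adding the two defining inequalities. Writing $f(x_2) \ge f(x_1) + \<s_1, x_2 - x_1\>$ and $f(x_1) \ge f(x_2) + \<s_2, x_1 - x_2\>$ and summing eliminates the function values and yields $0 \ge \<s_2 - s_1, x_1 - x_2\>$, i.e.\ $\<s_1 - s_2, x_1 - x_2\> \ge 0$.

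The last part, continuous differentiability of $f^*$, is the only piece with real content. The plan is to first show that strict convexity of $f$ on $[x_l, x_r]$ forces $\partial f^*$ to be single-valued on the interior of $I := [\inf\partial f(x_l), \sup\partial f(x_r)]$: if $x_1, x_2 \in \partial f^*(y)$ with $x_1 \neq x_2$, then by part~(i) both $x_1, x_2$ have $y$ as a subgradient of $f$, contradicting strict convexity via the monotonicity inequality (strictness would give a strict inequality summing to $0 > 0$). Since $f^*$ is convex on an open interval with a singleton subdifferential, it is differentiable there. Continuity of $(f^*)'$ then follows from the monotonicity of subgradients (a monotone single-valued selection from a maximal monotone operator on $\reals$ is automatically continuous) together with the fact that $(f^*)'$ is a monotone function on an interval without jumps, because any jump would produce a nonsingleton subdifferential. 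The main obstacle I expect is handling the endpoints of $I$ carefully, which I would address by restricting attention to the interior and invoking lower semicontinuity of $f^*$ to extend continuity to the closed interval when needed.
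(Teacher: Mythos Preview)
The paper does not prove this lemma at all; it is stated with a citation to \citet{HiriartUrrutyLe93ab} and used as a black box throughout Appendix~B. So there is no ``paper's own proof'' to compare against---your proposal is simply filling in a textbook citation.

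Your argument is essentially the standard one and is correct in outline. Two small points. In part~(i) you have a typo (``$x \in \partial f(x)$'' should read ``$y \in \partial f(x)$''), but the logic is fine. In part~(iii), before invoking strict convexity to rule out $x_1 \neq x_2$ in $\partial f^*(y)$, you need to check that both $x_1$ and $x_2$ actually lie in $[x_l, x_r]$; this follows from the monotonicity in part~(ii) once $y$ is in the open interval $(\inf\partial f(x_l), \sup\partial f(x_r))$, but it should be said explicitly. Also, once you have that $f^*$ is differentiable on an open interval, continuity of $(f^*)'$ is automatic for a convex function (the derivative of a differentiable convex function on an open interval is monotone and has no jumps because the one-sided derivatives agree), so the last paragraph of your plan can be shortened.
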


\subsection{Moduli of continuity}

\begin{lemma}
  \label{lemma:modcont}
  Let $f : \R \to \R$ be a subdifferentiable convex function. Define
  $f_\epsilon(x) = f(x) + \epsilon x$. Then
  \begin{equation}
    \argmin_x f_\epsilon(x) = \partial f^*(-\epsilon)
  \end{equation}
  Moreover,
\begin{gather}
     \dist(\partial f^*(0), \partial f^*(\epsilon))
    \vee \dist(\partial f^*(0), \partial f^*(-\epsilon)) 
     \le \modcont_f(\epsilon) \\
     \modcont_f(\epsilon)  \le
    \sup_x \{\dist(x, \partial f^*(0)) : x \in \partial f^*(\epsilon)\}
    \vee
    \sup_x \{\dist(x, \partial f^*(0)) : x \in \partial
    f^*(-\epsilon)\}
  \end{gather}
  In particular, if $x_0 = \argmin_x f(x)$ is unique
  and $f$ is strictly convex in a neighborhood of $x_0$, then
  there exists an $\epsilon_0 > 0$ such that $\epsilon \le \epsilon_0$ implies
  that
  \begin{equation}
    \modcont_f(\epsilon) = \max\left\{|{f^*}'(\epsilon) - x_0|,
    |{f^*}'(-\epsilon) - x_0|\right\}.
  \end{equation}
\end{lemma}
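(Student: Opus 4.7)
The plan is to exploit the Fenchel duality stated in Lemma on subgradients (denote it as Lemma \ref{lemma:subgradients}), together with a pair of explicit perturbations, to match both sides of the modulus $\omega_f$.

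First I would prove the identity $\argmin_x f_\epsilon(x) = \partial f^*(-\epsilon)$ by writing the first-order optimality condition: $x \in \argmin_x f_\epsilon$ iff $0 \in \partial f_\epsilon(x) = \partial f(x) + \epsilon$, i.e.\ $-\epsilon \in \partial f(x)$, and then applying the duality in Lemma \ref{lemma:subgradients} to rewrite this as $x \in \partial f^*(-\epsilon)$. For the lower bound on $\omega_f$, I would use the two canonical alternatives $g_{\pm}(x) = f(x) \pm \epsilon x$. These lie in $\F$ and each satisfies $g_\pm'(x) - f'(x) = \pm \epsilon$, so $\kappa(f, g_\pm) = \epsilon$ and $g_\pm$ is a feasible competitor in the definition \eqref{eqn:modulus}. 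By the identity just proved, $\mc{X}_{g_{\pm}}^* = \partial f^*(\mp \epsilon)$ while $\mc{X}_f^* = \partial f^*(0)$, so $d(f, g_\pm) = \dist(\partial f^*(0), \partial f^*(\mp\epsilon))$, which gives the lower bound by taking the max over the two signs.

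For the upper bound, I would take an arbitrary $g \in \F$ with $\kappa(f,g) \le \epsilon$ and any $x_g^* \in \mc{X}_g^*$. Under the canonical minimum-norm oracle convention $g'(x) = \argmin_{z \in \partial g(x)} \|z\|$, optimality gives $g'(x_g^*) = 0$, hence $|f'(x_g^*)| \le \kappa(f,g) \le \epsilon$, so $f'(x_g^*) \in [-\epsilon, \epsilon]$. Applying Lemma \ref{lemma:subgradients} again yields $x_g^* \in \partial f^*(s)$ for some $s \in [-\epsilon,\epsilon]$. The monotonicity of subgradients from the same lemma shows that in one dimension the sets $\partial f^*(s)$ move monotonically in $s$; in particular any $x \in \partial f^*(s)$ for $s \in (0,\epsilon]$ lies between $\partial f^*(0)$ and $\partial f^*(\epsilon)$, and analogously on the negative side. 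Consequently $\dist(x_g^*, \partial f^*(0))$ is bounded by one of the two suprema appearing in the claimed upper bound, and taking sup over $g$ gives the inequality for $\omega_f(\epsilon)$.

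Finally, if $x_0$ is the unique minimizer and $f$ is strictly convex in a neighborhood of $x_0$, the last assertion of Lemma \ref{lemma:subgradients} shows that $f^*$ is continuously differentiable on an interval around $0$. Choosing $\epsilon_0$ small enough that $[-\epsilon_0,\epsilon_0]$ lies in this interval, $\partial f^*(\pm \epsilon) = \{ {f^*}'(\pm \epsilon)\}$ is a singleton and $\partial f^*(0) = \{x_0\}$, so the lower and upper bounds already proved collapse to the same quantity $\max\{|{f^*}'(\epsilon)-x_0|, |{f^*}'(-\epsilon)-x_0|\}$. I expect the main obstacle to be justifying the upper bound carefully: specifically, making sure the deterministic oracle choice really forces $g'(x_g^*) = 0$ and handling the intermediate case $s \in (-\epsilon,\epsilon)$ via the subgradient monotonicity, rather than being able to directly identify $x_g^*$ with an element of $\partial f^*(\pm \epsilon)$.
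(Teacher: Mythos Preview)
Your proposal is correct and follows essentially the same route as the paper's proof: the identity via first-order optimality and Fenchel duality, the lower bound via the linear tilts $f_{\pm\epsilon}$, the upper bound by locating $x_g^*$ in some $\partial f^*(s)$ with $s\in[-\epsilon,\epsilon]$ and then invoking subgradient monotonicity, and the final equality via differentiability of $f^*$ near $0$. The only cosmetic difference is that the paper derives $\argmin_x f_\epsilon(x)=\partial f^*(-\epsilon)$ by computing $f_\epsilon^*(y)=f^*(y-\epsilon)$ directly, and in the upper bound it phrases the step ``$0\in\partial g(x_g^*)$ forces some $s\in\partial f(x_g^*)\cap[-\epsilon,\epsilon]$'' as a Hausdorff-distance statement rather than, as you do, explicitly invoking the minimum-norm oracle convention to get $g'(x_g^*)=0$; your version is arguably cleaner on that point.
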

\begin{proof}
  Let $x_0 \in \argmin_x f(x)$. Using Lemma~\ref{lemma:subgradients},
  it is clear that $\argmin_x f(x) = \partial f^*(0)$, and more generally, that
  \begin{equation}
    \label{eqn:argmins}
    \partial f^*(y)
    = \argmax_x \left\{y^T x - f(x) \right\}
    = \argmin_x \left\{f(x) - y^T x\right\}.
  \end{equation}

  We begin by providing the lower bound on $\modcont_f$.
  For $\epsilon > 0$, define the function
  $f_\epsilon(x) = f(x) + \epsilon x$. Then
  certainly $\fnmetric(f, f_\epsilon) \le \epsilon$. Moreover,
  we have
  \begin{equation}
    f_\epsilon^*(y)
    = \sup_x \{yx - f(x) - \epsilon x\}
    = \sup_x \{(y - \epsilon) x - f(x)\}
    = f^*(y - \epsilon),
  \end{equation}
  so that $\argmin_x f_\epsilon(x) = \partial f^*(-\epsilon)$.
  Noting that $x_0 \in \partial f^*(0)$ and that
  subgradients are increasing by Lemma~\ref{lemma:subgradients}, we have that
  \begin{equation}
    \argmin_x f_\epsilon(x) = \partial f^*(-\epsilon) \le \partial f^*(0)
    = \argmin_x f(x).
  \end{equation}
  That is, we have $\sup\{x_\epsilon \in \argmin_x f_\epsilon(x)\}
  \le \inf\{x_0 \in \argmin_x f(x)\}$
  and
  \begin{equation}
    \modcont_f(\epsilon)
    \ge \inf\left\{|s_\epsilon - s_0| : s_\epsilon \in \partial f^*(-\epsilon),
    s_0 \in \partial f^*(0)\right\}.
  \end{equation}
  An identical argument with $f_{-\epsilon}$ gives the lower bound.

  For the upper bound on the modulus of continuity, we note that
  if $g$ is a convex function with $\fnmetric(f, g) \le \epsilon$, and
  $x_g \in \argmin_x g(x)$, then
  there must be some $s \in \partial f(x_g)$ with
  $\epsilon \ge s \ge -\epsilon$,
  because $0 \in \partial g(x_g)$, where we have used the
  definition of the Hausdorff distance.
  Now, for this particular $s$, by Lemma~\ref{lemma:subgradients}
  we have that
  \begin{equation}
    x_g \in \partial f^*(s).
  \end{equation}
  Again using the increasing behavior of subgradients, we obtain that
  \begin{equation}
    \inf \partial f^*(-\epsilon)
    \le x_g \le \sup \partial f^*(\epsilon),
  \end{equation}
  which gives the claimed upper bound in the lemma once we recognize
  that $x_0 \in \partial f^*(0)$, and the definition
  of distance for $\modcont_f$ is $\fndist(f, g) = \inf \{|x_0 - x_g\opt|
  : x_0 \in \argmin_x f(x), x_g\opt \in \argmin_x g(x)\}$.

  The final result, with the uniqueness, is an immediate consequence of the
  differentiability properties in Lemma~\ref{lemma:subgradients}.
\end{proof}


Now we calculate bounds for a few
example moduli of contiuity using Lemma~\ref{lemma:modcont}. Roughly, we
focus on non-pathological convex functions to allow us to give explicit
calculations. Let $f : \R \to \R$ be a convex function satisfying $\partial
f^*(0) = \argmin_x f(x) = [x_l, x_r]$. In addition, assume that
for $\delta > 0$, we have for some powers
$\pow_l, \pow_r \ge 1$ and constants $\lambda_l > 0$ and $\lambda_r > 0$
that
\begin{equation}
  \label{eqn:local-growth}
  f(x_l - \delta) = f(x_l) + \lambda_l \delta^{\pow_l}
  + o(\delta^{\pow_l})
  ~~ \mbox{and} ~~
  f(x_r + \delta) = f(x_r) + \lambda_r \delta^{\pow_r}
  + o(\delta^{\pow_r}).
\end{equation}
That is, in a neighborhood of the optimal region, the function $f$
grows like a polynomial. The condition~\eqref{eqn:local-growth} is
not too restrictive, but does rule out functions such as
$f(x) = e^{-\frac{1}{x^2}}$.
\begin{lemma}
  \label{lemma:growth-f*}
  Let $f$ satisfy the condition~\eqref{eqn:local-growth}.
  For any $c > 1$,
  there exists some $\epsilon_0 > 0$ such that for
  $\epsilon \in (0, \epsilon_0)$
  \begin{subequations}
    \begin{equation}
      \label{eqn:right-side-bound}
      x_r + \left(\frac{\epsilon}{C \lambda_r \pow_r}\right)^\frac{1}{
        \pow_r - 1}
      \le \inf \partial f^*(\epsilon)
      \le \sup \partial f^*(\epsilon)
      \le x_r + \left(\frac{C \epsilon}{\lambda_r}\right)^\frac{1}{\pow_r - 1}
    \end{equation}
    and
    \begin{equation}
      \label{eqn:left-side-bound}
      x_l - \left(\frac{\epsilon}{C \lambda_l  \pow_l}\right)^\frac{1}{
        \pow_l - 1}
      \ge \sup \partial f^*(-\epsilon)
      \ge \inf \partial f^*(-\epsilon)
      \ge x_l - \left(\frac{C \epsilon}{\lambda_l}\right)^\frac{1}{\pow_l - 1}.
    \end{equation}
  \end{subequations}
  Moreover, setting $\pow = \max\{\pow_r, \pow_l\}$ and
  letting
  \begin{equation}
    \lambda = \begin{cases}
      \lambda_l & \mbox{if} ~ \pow_l > \pow_r, \\
      \lambda_r & \mbox{if} ~ \pow_r > \pow_l, \\
      \max\{\lambda_r, \lambda_l\} & \mbox{otherwise},
    \end{cases}
  \end{equation}
  we have for all $\epsilon \in (0, \epsilon_0)$ that
  \begin{equation}
    \left(\frac{\epsilon}{C \lambda \pow}\right)^\frac{1}{\pow - 1}
    \le \modcont_f(\epsilon)
    \le \left(\frac{C \epsilon}{\lambda}\right)^\frac{1}{\pow - 1}.
  \end{equation}
\end{lemma}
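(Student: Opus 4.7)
The plan is to first control $\partial f^*(\pm \epsilon)$ near the endpoints of the minimizer set $[x_l, x_r] = \partial f^*(0)$, then invoke Lemma~\ref{lemma:modcont} to translate these into bounds on $\modcont_f(\epsilon)$. By the duality of Lemma~\ref{lemma:subgradients}, $x \in \partial f^*(\epsilon)$ if and only if $\epsilon \in \partial f(x)$, and the monotonicity of subgradients in one dimension forces $\partial f^*(\epsilon) \subset [x_r, \infty)$ and $\partial f^*(-\epsilon) \subset (-\infty, x_l]$ for $\epsilon > 0$. The problem thus reduces to bounding the set of $\delta > 0$ for which $\epsilon \in \partial f(x_r + \delta)$ (and, by symmetry, for which $-\epsilon \in \partial f(x_l - \delta)$), which I would do by comparing $\epsilon$ to difference quotients of $f$ via the subgradient inequality and then substituting the polynomial expansion~\eqref{eqn:local-growth}.

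For the \emph{upper} bound on such $\delta$, giving the upper bound in~\eqref{eqn:right-side-bound}, the subgradient inequality applied at $y = x_r$ yields $\epsilon \delta \ge f(x_r + \delta) - f(x_r) = \lambda_r \delta^{\pow_r}(1 + o(1))$, so $\delta \le (C\epsilon/\lambda_r)^{1/(\pow_r - 1)}$ for any $C > 1$ once $\epsilon$ is small enough that the $o(1)$ factor can be absorbed into $C$. For the matching \emph{lower} bound on $\delta$, I apply the subgradient inequality at $y = x_r + (1+\eta)\delta$ for a small $\eta > 0$:
\begin{equation*}
\epsilon \;\le\; \lambda_r \delta^{\pow_r - 1}\cdot\frac{(1+\eta)^{\pow_r} - 1}{\eta}\cdot(1+o(1)),
\end{equation*}
and then send $\eta \downarrow 0$, so that the ratio converges to $\pow_r$ and one obtains $\delta \ge (\epsilon/(C \lambda_r \pow_r))^{1/(\pow_r - 1)}$. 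The symmetric argument around $x_l$, using the expansion on that side, gives~\eqref{eqn:left-side-bound}.

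Combining these $\partial f^*$-bounds with Lemma~\ref{lemma:modcont}, whose lower bound on $\modcont_f(\epsilon)$ is the larger of $\inf \partial f^*(\epsilon) - x_r$ and $x_l - \sup \partial f^*(-\epsilon)$ and whose upper bound is the larger of $\sup \partial f^*(\epsilon) - x_r$ and $x_l - \inf \partial f^*(-\epsilon)$, yields a two-sided bound on $\modcont_f(\epsilon)$ as the maximum over the two sides of expressions of the form $(C\epsilon/\lambda_\bullet)^{1/(\pow_\bullet - 1)}$, and similarly for the lower bound. For $\epsilon$ small, the side with the larger exponent $1/(\pow - 1)$ --- equivalently, the larger $\pow$ --- dominates; this identifies $\pow = \pow_l \vee \pow_r$ and the $\lambda$ from the flatter side, and in the tie case $\pow_l = \pow_r$ the ratio $\lambda_{\max}/\lambda_{\min}$ is absorbed into the constant $C$.

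The main technical obstacle is obtaining the tight constant $\pow_r$ (rather than a cruder constant such as $2^{\pow_r}-1$) in the lower bound on $\delta$: this appears only in the $\eta \downarrow 0$ limit of the one-sided difference quotient, and it requires controlling the $o(\delta^{\pow_r})$ error term uniformly enough in $\eta$ that it can be absorbed into $C$ for $\epsilon$ below a threshold $\epsilon_0$ depending on $C$.
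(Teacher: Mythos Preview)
Your approach is correct and is essentially the paper's: sandwich $\partial f(x_r+\delta)$ between one-sided difference quotients of $f$ via the subgradient inequality, invert through the duality $x\in\partial f^*(\epsilon)\Leftrightarrow\epsilon\in\partial f(x)$ from Lemma~\ref{lemma:subgradients}, and then feed the resulting bounds on $\partial f^*(\pm\epsilon)$ into Lemma~\ref{lemma:modcont} to control $\modcont_f$.

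The one substantive difference is in the upper bound on $\sup\partial f(x_r+\delta)$. The paper uses the fixed secant from $x_r+\delta$ to $x_r+2\delta$, obtaining $\bigl(\lambda_r(2\delta)^{\pow_r}-\lambda_r\delta^{\pow_r}+o(\delta^{\pow_r})\bigr)/\delta$; it then rewrites this as $\lambda_r\pow_r\delta^{\pow_r-1}(1+o(1))$, which is a slip (the numerator is $\lambda_r(2^{\pow_r}-1)\delta^{\pow_r}$, not $\lambda_r\pow_r\delta^{\pow_r}$). Your device of taking the secant endpoint $x_r+(1+\eta)\delta$ and sending $\eta\downarrow 0$ is exactly what recovers the derivative-like constant $\pow_r$ and hence the form stated in the lemma, so the extra care you flag as the ``main technical obstacle'' is warranted rather than optional.
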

\begin{proof}
  We focus on the right side bound~\eqref{eqn:right-side-bound},
  as the proof of the left bound~\eqref{eqn:left-side-bound} is
  similar. We also let the constant be $c = 2$ for simplicity.

  For notational simplicity, let $\lambda = \lambda_r$ and $\pow = \pow_r$.
  By the fact that subgradients are increasing, we have for any $\delta > 0$
  that
  \begin{equation}
    \inf \partial f(x_r + \delta) \ge
    \frac{f(x_r + \delta) - f(x_r)}{\delta}
    = \frac{\lambda \delta^\pow + o(\delta^\pow)}{\delta}
    = \lambda (1 - o_\delta(1)) \delta^{\pow - 1}
    \label{eqn:lower-gradient-bound}
  \end{equation}
  as $\delta \downarrow 0$. Similarly, $\delta > 0$ we have
  \begin{align}
    \sup \partial f(x_r + \delta)
    & \le \frac{f(x_r + 2 \delta) - f(x_r + \delta)}{
      \delta}
    = \frac{\lambda(2\delta)^\pow - \lambda \delta^\pow
    + o(\delta^\pow)}{\delta} \nonumber \\
    & = \frac{\lambda \pow \delta^{\pow - 1} \delta
      + o(\delta^\pow)}{\delta}
    = (1 + o_\delta(1)) \lambda \pow \delta^{\pow - 1}.
    \label{eqn:upper-gradient-bound}
  \end{align}
  Combining inequalities~\eqref{eqn:lower-gradient-bound}
  and~\eqref{eqn:upper-gradient-bound},
  we thus see that there exists some $\delta_0 > 0$ such that for
  $\delta \in (0, \delta_0)$ we have
  \begin{equation}
    \frac{\lambda}{2} \delta^{\pow - 1}
    \le \inf \partial f(x_r + \delta)
    \le \sup \partial f(x_r + \delta)
    \le 2 \lambda \pow \delta^{\pow - 1}.
  \end{equation}
  Noting that $x_r + \delta \in \partial f^*(\epsilon)$ if and only if
  $\epsilon \in \partial f(x_r + \delta)$ by standard subgradient calculus
  (recall Lemma~\ref{lemma:subgradients}), we solve for $\epsilon =
  \frac{\lambda}{2} \delta^{\pow - 1}$ and $\epsilon = 2 \lambda \pow
  \delta^{\pow - 1}$ to attain inequality~\eqref{eqn:right-side-bound}. The
  bound~\eqref{eqn:left-side-bound}
  is similar.
\end{proof}

Lemma~\ref{lemma:growth-f*} shows that, as $\epsilon \to 0$,
we have $\modcont_f(\epsilon) \asymp \epsilon^\frac{1}{\pow - 1}$,
where $\pow = \max\{\pow_r, \pow_l\}$.
Finally, we show a type of continuity property with the
modulus of continuity.
\begin{lemma}
  \label{lemma:modconts-growth}
  Assume that $f$ has expansion~\eqref{eqn:local-growth}, and that
  either (i) $\pow_r > \pow_l$ or (ii) $\pow_r \ge \pow_l$ and
  $\lambda_r \ge \lambda_l$. Define
  $g(x) = f(x) - \epsilon x$. Then for any constants $c < 1 < C$,
  we have
  \begin{equation}
    \modcont_g(c \epsilon) \le (2C)^\frac{1}{\pow_r - 1}
    \left(\frac{\epsilon}{\lambda_r}
    \right)^\frac{1}{\pow_r - 1}
    \le (2C^2)^\frac{1}{\pow_r - 1} e \, \modcont_f(\epsilon)
  \end{equation}
  for all $\epsilon$ suitably close to $0$.
\end{lemma}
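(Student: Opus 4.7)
The plan is to reduce everything to known estimates on $f^*$ via the Fenchel-shift identity. A direct calculation from $g(x) = f(x) - \epsilon x$ gives $g^*(y) = \sup_x\{yx - f(x) + \epsilon x\} = f^*(y + \epsilon)$, hence $\partial g^*(y) = \partial f^*(y + \epsilon)$ for all $y$. In particular $\argmin g = \partial g^*(0) = \partial f^*(\epsilon)$, and because $\epsilon > 0$ while subgradients are monotone (Lemma~\ref{lemma:subgradients}), this set lies strictly to the right of $x_r$. The expansion~\eqref{eqn:local-growth} with $\pow_r > 1$ forces $f$ to be strictly convex to the right of $x_r$, so $g$ has a unique minimizer and is strictly convex near it, and Lemma~\ref{lemma:modcont} applies to $g$.

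Applied to $g$ at argument $c\epsilon$ and rewritten through the shift identity, Lemma~\ref{lemma:modcont} gives
\begin{align*}
\modcont_g(c\epsilon) &\le \sup\{\dist(x, \partial f^*(\epsilon)) : x \in \partial f^*((1+c)\epsilon)\} \\
&\quad \vee \sup\{\dist(x, \partial f^*(\epsilon)) : x \in \partial f^*((1-c)\epsilon)\}.
\end{align*}
Since $0 < (1-c)\epsilon < \epsilon < (1+c)\epsilon$, monotonicity places all three sets to the right of $x_r$ in that order, so both suprema are bounded by $\sup \partial f^*((1+c)\epsilon) - x_r$. The right-hand estimate~\eqref{eqn:right-side-bound} from Lemma~\ref{lemma:growth-f*}, applied at argument $(1+c)\epsilon < 2\epsilon$, then yields for $\epsilon$ small enough
\begin{equation*}
\sup \partial f^*((1+c)\epsilon) - x_r \le \left(\frac{C(1+c)\epsilon}{\lambda_r}\right)^{1/(\pow_r - 1)} \le \left(\frac{2C\epsilon}{\lambda_r}\right)^{1/(\pow_r - 1)},
\end{equation*}
which is the first claimed inequality.

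For the second inequality, the hypothesis—either $\pow_r > \pow_l$, or $\pow_r = \pow_l$ and $\lambda_r \ge \lambda_l$—ensures that the pair $(\pow, \lambda)$ appearing in Lemma~\ref{lemma:growth-f*} for $f$ is exactly $(\pow_r, \lambda_r)$. The lower half of that lemma then reads $\modcont_f(\epsilon) \ge (\epsilon/(C \lambda_r \pow_r))^{1/(\pow_r - 1)}$, so matching this against $(2C\epsilon/\lambda_r)^{1/(\pow_r - 1)}$ reduces the claim to verifying $\pow_r^{1/(\pow_r - 1)} \le e$ for all $\pow_r > 1$; this is equivalent to $\log \pow_r \le \pow_r - 1$, the elementary bound $\log(1+t) \le t$. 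Combining gives $(2C\epsilon/\lambda_r)^{1/(\pow_r - 1)} \le (2C^2)^{1/(\pow_r - 1)} e\, \modcont_f(\epsilon)$, completing the proof. The only real subtlety is keeping track of which side of $[x_l, x_r]$ the relevant subgradient sets land on; once the positivity of $(1 \pm c)\epsilon$ is noted, the left-side expansion parameters $(\pow_l, \lambda_l)$ never enter, which is exactly what the asymmetric hypothesis guarantees.
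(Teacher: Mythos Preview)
Your proof is correct and follows essentially the same route as the paper's: both use the shift identity $g^*(y)=f^*(y+\epsilon)$ together with Lemma~\ref{lemma:modcont} to push $\modcont_g(c\epsilon)$ back to a statement about $\partial f^*$ on the right of $x_r$, then invoke the right-side estimate~\eqref{eqn:right-side-bound} of Lemma~\ref{lemma:growth-f*} for the first inequality and the lower bound on $\modcont_f$ (with the observation $\pow_r^{1/(\pow_r-1)}\le e$) for the second. The only difference is bookkeeping: the paper applies the upper bound of Lemma~\ref{lemma:modcont} at level $\epsilon$ (tacitly using $c<1$) and lands on $\partial f^*(0),\partial f^*(\epsilon),\partial f^*(2\epsilon)$, whereas you apply it directly at level $c\epsilon$ and land on $\partial f^*((1-c)\epsilon),\partial f^*(\epsilon),\partial f^*((1+c)\epsilon)$ before using $(1+c)<2$; your version tracks the sup-over-$x$ form of the lemma more carefully and makes explicit why the left-side parameters $(\pow_l,\lambda_l)$ never appear.
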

\begin{proof}
  We know by the increasing properties of the subgradient set
  and Lemma~\ref{lemma:modcont} that for any $c < 1$
  \begin{align}
    \modcont_g\left(c \epsilon\right)
    & \le \max\{\dist(\partial g^*(\epsilon), \partial g^*(0)),
    \dist(\partial g^*(-\epsilon), \partial g^*(0))\} \\
    & = \max\{\dist(\partial f^*(2 \epsilon), \partial f^*(\epsilon)),
    \dist(\partial f^*(0), \partial f^*(\epsilon))\},
  \end{align}
  where we have used that $g^*(y) = \sup_x \{(y + \epsilon) x - f(x)\} =
  f^*(y + \epsilon)$. For small enough $\epsilon > 0$, we have by
  Lemma~\ref{lemma:growth-f*} that
  \begin{equation}
    \sup \partial f^*(2 \epsilon)
    \le \left(\frac{2 C \epsilon}{\lambda_r}\right)^\frac{1}{\pow_r - 1},
  \end{equation}
  which gives the first inequality.

  For the second inequality, we use that $\modcont_f(\epsilon) \ge (\epsilon
  / (C \lambda_r \pow_r))^\frac{1}{\pow_r - 1}$ to obtain
  \begin{equation}
    \left(\frac{2C \epsilon}{\lambda_r}\right)^\frac{1}{\pow_r - 1}
    = \pow_r^\frac{1}{\pow_r - 1}
    (2C^2)^\frac{1}{\pow_r - 1}
    \left(\frac{\epsilon}{C \lambda_r \pow_r}\right)^\frac{1}{\pow_r - 1}
    \le \pow_r^\frac{1}{\pow_r - 1}
    (2C^2)^\frac{1}{\pow_r - 1}\modcont_f(\epsilon)
    \le e (2C^2)^\frac{1}{\pow_r - 1}\modcont_f(\epsilon)
  \end{equation}
  as desired.
\end{proof}

\subsection{Superefficiency}

For distributions $P_0$ and $P_1$ define the
$\chi$-divergence by
\begin{align}
  \dchii{P_1}{P_0}
  &\defeq \int \left(\frac{dP_1}{dP_0} - 1\right) dP_1
  = \int \left(\frac{dP_1}{dP_0}\right) dP_1 - 1.
\end{align}
The following lemma, which is a stronger version of a result
due to \citet{BrownLo96}, gives
a result on superefficiency.
\begin{lemma}
\label{lemma:brown-low-stronger}
  Let $\what{x}$ be any function of a
  sample $\statrv$, and let $X_0$ and $X_1$ be compact convex sets
  (associated with distributions $P_0$ and $P_1$). Let $\dist(x, X) =
  \inf_{y \in X} |y - x|$ and $\dist(X_0, X_1) = \inf_{x_0 \in X_0}
  \dist(x_0, X_1)$.  Then
  \begin{align}
    \E_{P_1}[\dist(\what{x}, X_1)]
    & \ge \hinge{\dist(X_0, X_1)
      - \sqrt{\E_{P_0}[\dist(\what{x}, X_0)^2]
        (\dchii{P_1}{P_0} + 1)}} \\
    & \ge \dist(X_0, X_1)
    \hinge{1 - \frac{\sqrt{\E_{P_0}[\dist(\what{x}, X_0)^2]
          (\dchii{P_1}{P_0} + 1)}}{\dist(X_0, X_1)}}.
  \end{align}
\end{lemma}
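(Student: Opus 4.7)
The result is a quantitative change-of-measure bound, and the overall strategy is to push all quantities over to $P_0$ via the likelihood ratio $L = dP_1/dP_0$, then control the resulting cross term by Cauchy--Schwarz using the identity $\E_{P_0}[L^2] = 1 + \dchii{P_1}{P_0}$, which follows directly from the definition given in the excerpt.

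First I would establish a pointwise triangle-inequality lower bound
\[
\dist(\what{x}, X_1) \ge \dist(X_0, X_1) - \dist(\what{x}, X_0).
\]
This is immediate: picking $x_0 \in X_0$ nearest to $\what{x}$ and $x_1 \in X_1$ nearest to $\what{x}$, the triangle inequality $|x_0 - x_1| \le |x_0 - \what{x}| + |\what{x} - x_1|$ gives $\dist(X_0, X_1) \le \dist(\what{x}, X_0) + \dist(\what{x}, X_1)$, and rearranging yields the claim.

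Next I would take the $P_1$-expectation of this pointwise inequality and aim to control $\E_{P_1}[\dist(\what{x}, X_0)]$ by quantities involving $P_0$ only. Writing $L = dP_1/dP_0$ and applying Cauchy--Schwarz gives
\[
\E_{P_1}[\dist(\what{x}, X_0)] = \E_{P_0}[L \cdot \dist(\what{x}, X_0)] \le \sqrt{\E_{P_0}[\dist(\what{x}, X_0)^2] \cdot \E_{P_0}[L^2]}.
\]
The definition $\dchii{P_1}{P_0} = \E_{P_1}[L] - 1$ from the excerpt combined with the identity $\E_{P_0}[L^2] = \int L \cdot L\, dP_0 = \int L\, dP_1 = \E_{P_1}[L]$ then gives $\E_{P_0}[L^2] = 1 + \dchii{P_1}{P_0}$. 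Substituting back,
\[
\E_{P_1}[\dist(\what{x}, X_1)] \ge \dist(X_0, X_1) - \sqrt{\E_{P_0}[\dist(\what{x}, X_0)^2]\,(1 + \dchii{P_1}{P_0})}.
\]
Since $\E_{P_1}[\dist(\what{x}, X_1)] \ge 0$ in any case, the positive-part operator may be inserted on the right without weakening the bound, proving the first stated inequality. The second inequality is then the trivial algebraic identity $[A - B]_+ = A[1 - B/A]_+$ for $A > 0$, with both sides vanishing when $A = \dist(X_0, X_1) = 0$.

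The argument involves no real obstacle, only careful bookkeeping: one must defer the positive-part operator to the very end so that the triangle-inequality step is not weakened prematurely, and must use the variant $\chi$-divergence of the excerpt (for which $1 + \dchii{P_1}{P_0}$ equals the second moment of $L$) rather than the more common $\chi^2$-divergence, which would give the wrong normalization. If $P_1$ is not absolutely continuous with respect to $P_0$, then $\dchii{P_1}{P_0} = +\infty$ and the bound is vacuous; otherwise the use of Cauchy--Schwarz is justified.
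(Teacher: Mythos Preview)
Your proof is correct and follows essentially the same route as the paper: a pointwise triangle inequality $\dist(\what{x},X_1)\ge \dist(X_0,X_1)-\dist(\what{x},X_0)$, then a change of measure and Cauchy--Schwarz to bound $\E_{P_1}[\dist(\what{x},X_0)]$ by $\sqrt{\E_{P_0}[\dist(\what{x},X_0)^2]\,(1+\dchii{P_1}{P_0})}$. Your additional remarks on deferring the positive part and on the $\chi$-divergence normalization are accurate and only make the paper's terse argument more explicit.
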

\begin{proof}
  We have
  \begin{align*}
    \E_{P_1}[\dist(\what{x}, X_1)]
    & \stackrel{(i)}{\geq} \dist(X_0, X_1) - \E_{P_1}[\dist(\what{x},X_0)]\\
    & \stackrel{(ii)}{\geq}
    \dist(X_0, X_1) - \sqrt{\E_{P_0}[\dist(\what{x},X_0)^2]\cdot \int\left(\frac{dP_1}{dP_0}\right)dP_1}\\
    &=\dist(X_0, X_1) - \sqrt{\E_{P_0}[\dist(\what{x}, X_0)^2](\dchii{P_1}{P_0} + 1)}
  \end{align*}
  where inequality (i) uses the triangle inequality
  and inequality (ii) uses Cauchy-Schwarz.
\end{proof}

We now present two lemmas on $\chi$-divergence that will be useful.
The first is a standard algebraic calculation.
\begin{lemma}
  Let $P_0$ and $P_1$ be normal distributions with means $\mu_0$ and $\mu_1$,
  respectively, and variances $\stddev^2$. Then
  \begin{align}
    \dchii{P_0}{P_1} &= \dchii{P_1}{P_0}
    = \exp\left(\frac{(\mu_0 - \mu_1)^2}{\stddev^2}\right) - 1.
  \end{align}
\end{lemma}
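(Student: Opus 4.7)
The plan is to compute the integral $\int (dP_1/dP_0)\,dP_1$ directly from the definition $\dchii{P_1}{P_0} = \int (dP_1/dP_0)\,dP_1 - 1$ by writing out the two Gaussian densities and completing the square. For $P_i = \normal(\mu_i, \stddev^2)$ the log-likelihood ratio is linear in $x$, which is the feature that makes the computation clean: multiplying $dP_1/dP_0$ by the density of $P_1$ again gives (up to a constant factor) a Gaussian density in $x$, and the value of the $\chi$-integral is just the constant factor minus $1$.

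Concretely, the first step I would take is to write
\[
\frac{dP_1}{dP_0}(x) = \exp\!\left(\frac{(x - \mu_0)^2 - (x - \mu_1)^2}{2\stddev^2}\right)
= \exp\!\left(\frac{(\mu_1 - \mu_0)(2x - \mu_0 - \mu_1)}{2\stddev^2}\right),
\]
and then multiply by $dP_1/dx = (2\pi\stddev^2)^{-1/2}\exp(-(x-\mu_1)^2/(2\stddev^2))$. The second step is to collect quadratic and linear terms in the exponent and complete the square; this should yield
\[
-\frac{(x - (2\mu_1 - \mu_0))^2}{2\stddev^2} + \frac{(\mu_1 - \mu_0)^2}{\stddev^2}.
\]
The third step is to integrate out $x$ against the Gaussian of mean $2\mu_1 - \mu_0$ and variance $\stddev^2$, which contributes a factor of $1$, leaving $\exp((\mu_1-\mu_0)^2/\stddev^2)$. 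Subtracting $1$ gives the claimed expression for $\dchii{P_1}{P_0}$.

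Finally, the symmetry $\dchii{P_0}{P_1} = \dchii{P_1}{P_0}$ is immediate, since the resulting expression depends on the means only through the squared difference $(\mu_0-\mu_1)^2$; repeating the same calculation with the roles of $P_0$ and $P_1$ exchanged produces the identical value. There is no real obstacle here; the whole argument is just a Gaussian integral, and the only care required is bookkeeping of signs and constants while completing the square.
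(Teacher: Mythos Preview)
Your computation is correct; the paper does not actually give a proof of this lemma, stating only that it ``is a standard algebraic calculation,'' and your direct completion-of-the-square argument is precisely that standard calculation. There is nothing to add.
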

\noindent
For the second lemma, we assume that $\what{x}$ is constructed
based on noisy subgradient information from a subgradient oracle, which
upon being queried at a point $x$, returns
\begin{equation}
  \label{eqn:subgradient-oracle}
  f'(x) + \noise,
  ~~ \mbox{where}~ \noise \simiid \normal(0, \stddev^2)
  ~~ \mbox{and} ~~
  f'(x) = \argmin_{s \in \partial f(x)} \{|s|\}.
\end{equation}
The latter condition simply specifies the subgradient the oracle chooses;
any specified choice of subgradient is sufficient. Because $\partial f(x)$
is a closed convex set for any $x$, we see that if $f$ and $g$ are convex
functions with $\fnmetric(f, g) \le \epsilon$, then $|f'(x) - g'(x)| \le
\epsilon$ with the construction~\eqref{eqn:subgradient-oracle} of
subgradient oracle.
\begin{lemma}
  \label{lemma:chi-square-gradients}
  Let the subgradient oracle be given by~\eqref{eqn:subgradient-oracle},
  and let $P_f^T$ and $P_g^T$ be the distributions (respectively)
  of the observed stochastic sub-gradients
  \begin{equation}
    s_i = f'(x_i) + \noise_i
    ~~ \mbox{or} ~~
    s_i = g'(x_i) + \noise_i,
  \end{equation}
  where $x_i$ is a measurable function of an independent noise variable
  $\statrv_0$ and the preceding sequence of stochastic gradients $\{s_1,
  \ldots, s_{i-1}\}$.  Let $\fnmetric(f, g) \le \epsilon$. Then
  \begin{align}
    \dchii{P_f^T}{P_g^T}
    &\le \exp\left(\frac{T \epsilon^2}{\stddev^2}\right) - 1.
  \end{align}
\end{lemma}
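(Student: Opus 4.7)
The plan is to tensorize the $\chi$-type divergence along the sequence of queries, using that the observation at time $i$ is Gaussian with mean determined by $x_i$, which in turn is measurable with respect to the past. By the definition given in the excerpt,
\begin{equation}
1 + \dchii{P_f^T}{P_g^T} = \E_{P_g^T}\!\left[\left(\frac{dP_f^T}{dP_g^T}\right)^{\!2}\right],
\end{equation}
so the task reduces to bounding the second moment of the likelihood ratio under $P_g^T$.

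First I would factor the likelihood ratio. Writing $s_{1:T}$ for the observed gradients and letting $\phi_h(\cdot\mid s_{1:i-1})$ denote the conditional density under oracle $h\in\{f,g\}$, the assumption that $x_i$ is a measurable function of $\xi_0$ and $s_{1:i-1}$ gives, conditionally, $s_i\mid s_{1:i-1}\sim\normal(h'(x_i),\sigma^2)$. Hence
\begin{equation}
L_T := \frac{dP_f^T}{dP_g^T}(s_{1:T}) = \prod_{i=1}^{T} \frac{\phi_f(s_i\mid s_{1:i-1})}{\phi_g(s_i\mid s_{1:i-1})},
\end{equation}
and a direct Gaussian calculation shows the $i$-th factor equals $\exp\bigl(\Delta_i(2s_i-f'(x_i)-g'(x_i))/(2\sigma^2)\bigr)$, where $\Delta_i := f'(x_i)-g'(x_i)$.

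Next I would iterate via the tower property. Conditionally on $s_{1:T-1}$, under $P_g^T$ the observation $s_T$ is $\normal(g'(x_T),\sigma^2)$, so $2s_T-f'(x_T)-g'(x_T)\sim\normal(-\Delta_T,4\sigma^2)$, and a standard moment-generating function computation gives
\begin{equation}
\E_{P_g^T}\!\left[\left(\frac{\phi_f(s_T\mid s_{1:T-1})}{\phi_g(s_T\mid s_{1:T-1})}\right)^{\!2}\,\Bigg|\,s_{1:T-1}\right] = \exp\!\left(\frac{\Delta_T^2}{\sigma^2}\right).
\end{equation}
Because the construction~\eqref{eqn:subgradient-oracle} of the oracle returns the specified subgradient, the assumption $\fnmetric(f,g)\le\epsilon$ yields $|\Delta_T|\le\epsilon$ pointwise, so the conditional expectation is at most $\exp(\epsilon^2/\sigma^2)$. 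Taking outer expectation gives $\E_{P_g^T}[L_T^2]\le \exp(\epsilon^2/\sigma^2)\,\E_{P_g^T}[L_{T-1}^2]$, and iterating (or handling $\xi_0$ by an initial conditioning step) yields $\E_{P_g^T}[L_T^2]\le\exp(T\epsilon^2/\sigma^2)$, which is the claim.

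There is no real obstacle here beyond bookkeeping: the only subtle point is ensuring the conditional distribution of $s_i$ under each oracle really is $\normal(h'(x_i),\sigma^2)$ with $x_i$ adapted to the past, which is exactly what the measurability assumption on the algorithm and the oracle specification~\eqref{eqn:subgradient-oracle} give us. Any algorithmic randomness $\xi_0$ is common to both distributions and drops out after a single extra conditioning step, so it does not enter the bound.
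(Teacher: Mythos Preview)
Your proof is correct and follows essentially the same route as the paper's: factor the likelihood ratio along the query sequence, use the measurability of $x_i$ to reduce each step to a Gaussian computation yielding $\exp(\Delta_i^2/\sigma^2)$, bound by $\exp(\epsilon^2/\sigma^2)$ using $\fnmetric(f,g)\le\epsilon$, and iterate. The only cosmetic difference is that the paper works with the equivalent representation $1+\dchii{P_f^T}{P_g^T}=\E_{P_f^T}[dP_f^T/dP_g^T]$ rather than your $\E_{P_g^T}[(dP_f^T/dP_g^T)^2]$, but these are algebraically identical and lead to the same per-step bound.
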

\begin{proof}
  Let $s_i$ be the $i$th observed stochastic subgradient in the sequence,
  and let the $\sigma$-field
  of the observed sequence through time $i$ be 
  $\mc{F}_i = \sigma(\statrv_0, s_1, \ldots, s_i)$. Then we have
  \begin{align}
    \dchii{P_f^T}{P_g^T} + 1
    & = \int \frac{dP_f^T(s_{1:n})}{dP_g^T(s_{1:n})}
    dP_f^T(s_{1:n}) \\
    & = \int \prod_{i = 1}^T \left[
      \frac{dP_f(s_i \mid s_{1:i-1})}{
      dP_g(s_i \mid s_{1:i-1})}
      dP_f(s_i \mid s_{1:i-1})\right] \\
    & = \E\left[
      \prod_{i = 1}^T \E_{P_f} \left[
        \frac{dP_f(S_i \mid \mc{F}_{i-1})}{
          dP_g(S_i \mid \mc{F}_{i-1})} \mid \mc{F}_{i-1}
        \right]\right].
  \end{align}
  By the measurability assumption on $x_i$, that is,
  $x_i \in \mc{F}_{i-1}$, the inner expectation is simply one plus the
  $\chi^2$ distance between two distributions
  $\normal(f'(x_i), \stddev^2)$ and $\normal(g'(x_i), \stddev^2)$,
  which we know satisfies
  \begin{equation}
    \E_{P_f} \left[
      \frac{dP_f(S_i \mid \mc{F}_{i-1})}{
        dP_g(S_i \mid \mc{F}_{i-1})} \mid \mc{F}_{i-1}
      \right]
    = \exp\left(\frac{(f'(x_i) - g'(x_i))^2}{\stddev^2}\right)
    \le \exp\left(\frac{\epsilon^2}{\stddev^2}\right).
  \end{equation}
  Taking the product over all $T$ terms yields the lemma.
\end{proof}

\begin{lemma}
  Let $f$ be a closed convex function.
  Define the function
  \begin{equation}
    \begin{split}
      H(\epsilon) & \defeq
      \inf \left\{|x - x_0| : x \in \partial f^*(\epsilon),
      x_0 \in \partial f^*(0) \right\}
      \vee \inf \left\{|x - x_0| : x \in \partial f^*(-\epsilon),
      x_0 \in \partial f^*(0) \right\} \\
      & \; = \dist(\partial f^*(\epsilon), \partial f^*(0))
      \vee \dist(\partial f^*(-\epsilon), \partial f^*(0)).
    \end{split}
    \label{eqn:def-big-H}
  \end{equation}
  For any $0 \le c_l < 1$ and $1 < c_u < \infty$,
  \begin{equation}
    \modcont_f(c_u \epsilon) \ge
    H(\epsilon) \ge \modcont_f(c_l \epsilon).
  \end{equation}
\end{lemma}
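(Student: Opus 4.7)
The plan is to handle the two inequalities separately, both via the duality between $\partial f$ and $\partial f^*$ and the monotonicity of subgradients from Lemma~\ref{lemma:subgradients}, combined with the linear-perturbation identity from Lemma~\ref{lemma:modcont}.

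For the lower bound $\omega_f(c_u \epsilon) \ge H(\epsilon)$, most of the work has already been carried out inside Lemma~\ref{lemma:modcont}, which exhibits the perturbations $g_\pm(x) = f(x) \mp \epsilon x$. These satisfy $\fnmetric(f, g_\pm) = \epsilon$ and $\argmin g_\pm = \partial f^*(\pm \epsilon)$, so that $\fndist(f, g_\pm) = \dist(\partial f^*(\pm\epsilon), \partial f^*(0))$. Feeding both into the definition of the modulus gives $\omega_f(\epsilon) \ge H(\epsilon)$, and since $c_u > 1$ and $\omega_f$ is nondecreasing in its argument, $\omega_f(c_u \epsilon) \ge \omega_f(\epsilon) \ge H(\epsilon)$.

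For the upper bound $H(\epsilon) \ge \omega_f(c_l \epsilon)$, I would fix any $g \in \F$ with $\fnmetric(f, g) \le c_l \epsilon$ and any $x_g \in \argmin g$, then argue that $x_g$ cannot escape a two-sided sandwich centered on $\argmin f$. Under the minimum-norm convention for the oracle, $g'(x_g) = 0$, whence $|f'(x_g)| = |f'(x_g) - g'(x_g)| \le c_l \epsilon$. Setting $s \defeq f'(x_g) \in \partial f(x_g)$, the duality part of Lemma~\ref{lemma:subgradients} yields $x_g \in \partial f^*(s)$ with $|s| \le c_l \epsilon$. In one dimension, the increasing-subgradient inequality specializes to both $\inf \partial f^*(\cdot)$ and $\sup \partial f^*(\cdot)$ being nondecreasing, which sandwiches $x_g$ between $\sup \partial f^*(0)$ and $\inf \partial f^*(c_l \epsilon)$ when $s \ge 0$, and symmetrically between $\sup \partial f^*(-c_l \epsilon)$ and $\inf \partial f^*(0)$ when $s \le 0$. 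Consequently $\inf_{x_0 \in \partial f^*(0)} |x_g - x_0|$ is bounded by $\dist(\partial f^*(c_l\epsilon), \partial f^*(0)) \vee \dist(\partial f^*(-c_l\epsilon), \partial f^*(0))$, and taking the supremum over $g$ bounds $\omega_f(c_l \epsilon)$ by this same quantity.

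The main, though still routine, obstacle is then to upgrade this intermediate bound to $H(\epsilon)$ itself using $c_l < 1$. Since $c_l\epsilon \le \epsilon$, the monotonicity of $\inf \partial f^*$ gives $\dist(\partial f^*(c_l \epsilon), \partial f^*(0)) \le \dist(\partial f^*(\epsilon), \partial f^*(0))$, and likewise on the negative side, so $\omega_f(c_l \epsilon) \le H(\epsilon)$ as required. The one delicate point throughout is the handling of possibly coincident endpoints of adjacent sets $\partial f^*(s_1), \partial f^*(s_2)$ when $f$ has flat segments; such overlaps force the sandwich to be non-strict, but they contribute a zero gap to each $\dist$ and therefore do not spoil any of the inequalities above.
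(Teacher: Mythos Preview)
Your proof is essentially correct and follows the natural line of attack; the paper in fact states this lemma without proof, so there is nothing to compare against beyond checking soundness. Both directions are handled the right way: the bound $\modcont_f(c_u\epsilon)\ge H(\epsilon)$ is indeed immediate from Lemma~\ref{lemma:modcont} plus monotonicity of $\modcont_f$, and for the other direction you correctly reduce to locating $x_g\in\partial f^*(s)$ with $|s|\le c_l\epsilon$.

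There is one small gap in your two-step decomposition for the upper bound. You first claim the sandwich $\sup\partial f^*(0)\le x_g\le\inf\partial f^*(c_l\epsilon)$, hence $\modcont_f(c_l\epsilon)\le H(c_l\epsilon)$, and only afterwards invoke $c_l<1$ to pass to $H(\epsilon)$. But the upper half of the sandwich does \emph{not} follow from monotonicity alone when $s=c_l\epsilon$ exactly: in that boundary case $x_g\in\partial f^*(c_l\epsilon)$ may sit anywhere in that (possibly nondegenerate) interval, so you only get $x_g\le\sup\partial f^*(c_l\epsilon)$, not $x_g\le\inf\partial f^*(c_l\epsilon)$. Your closing remark about ``coincident endpoints'' does not cover this.

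The cleanest repair is to skip the intermediate $H(c_l\epsilon)$ entirely: since $c_l<1$ you have $|s|\le c_l\epsilon<\epsilon$ \emph{strictly}, and then monotonicity of subgradients (with the strict inequality $s<\epsilon$) yields $x_g\le\inf\partial f^*(\epsilon)$ directly, giving $\dist(x_g,\partial f^*(0))\le\dist(\partial f^*(\epsilon),\partial f^*(0))\le H(\epsilon)$ in one stroke. This is presumably why the lemma is phrased with strict $c_l<1<c_u$ rather than as an equality.
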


\begin{proposition}
  \label{proposition:basic-super-efficiency}
  Define $H$ to be the function~\eqref{eqn:def-big-H} and
  assume additionally that
  $\delta < \sqrt{\frac{1}{8 e}}$.
  If $\what{x}$ is any estimator such that
  \begin{equation}
    \sqrt{\E_{P_f^T}\left[\dist(\hat x,\mc{X}_f^*)^2\right]}
    \le \delta \modcont_f(\stddev / \sqrt{T}),
  \end{equation}
  then taking $f_1(x) = f(x) + \sqrt{\frac{\stddev^2 \log
      \frac{1}{8\delta^2}}{T}} x$ and $f_{-1}(x) = f(x) -
  \sqrt{\frac{\stddev^2 \log \frac{1}{8\delta^2}}{T}} x$,
  we have
  \begin{align}
    \max_{g \in \{f_1, f_{-1}\}} 
    \E_{P_g^T}\left[\dist(\hat x,\mc{X}_g^*)\right]
     \ge
 & \sup_{0 < c < \log \frac{1}{8 \delta^2}} \modcont_f\left(
    \sqrt{\frac{c \stddev^2}{T}}\right)
    \left(1 - \frac{\modcont_f(\stddev / \sqrt{T})}{
      2\sqrt{2} \modcont_f(\sqrt{c \stddev^2 / T})}\right) \\
    & \ge \frac{4 - \sqrt{2}}{4}
    H\left(\sqrt{\frac{\stddev^2 \log \frac{1}{8 \delta^2}}{T}}\right).
  \end{align}
\end{proposition}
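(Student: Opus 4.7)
The plan is to apply Lemma~\ref{lemma:brown-low-stronger} with $P_0 = P_f^T$, $X_0 = \mc{X}_f^*$ and $P_1 = P_g^T$, $X_1 = \mc{X}_g^*$, where $g$ ranges over a perturbation family of the form $g(x) = f(x) \pm \epsilon x$. The lemma yields
\[
\E_{P_g^T}[\dist(\hat x, \mc{X}_g^*)] \ge \fndist(\mc{X}_f^*, \mc{X}_g^*)\left[1 - \frac{\sqrt{\E_{P_f^T}[\dist(\hat x, \mc{X}_f^*)^2]\,(\dchii{P_g^T}{P_f^T} + 1)}}{\fndist(\mc{X}_f^*, \mc{X}_g^*)}\right]_+,
\]
so the proof reduces to controlling the chi-divergence, controlling the set-to-set distance, and then maximizing over the two sign choices for $g$.

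Since $\fnmetric(f, g) = \epsilon$ for $g(x) = f(x) \pm \epsilon x$, Lemma~\ref{lemma:chi-square-gradients} gives $\dchii{P_g^T}{P_f^T} + 1 \le \exp(T\epsilon^2/\stddev^2)$. Setting $\epsilon = \sqrt{c\stddev^2/T}$ with $c \le \log(1/(8\delta^2))$ and plugging in the squared-error hypothesis yields
\[
\sqrt{\E_{P_f^T}[\dist(\hat x,\mc{X}_f^*)^2]\,(\dchii{P_g^T}{P_f^T} + 1)} \le \delta\, e^{c/2}\,\modcont_f(\stddev/\sqrt{T}) \le \frac{1}{2\sqrt{2}}\,\modcont_f(\stddev/\sqrt{T}),
\]
since this choice of $c$ forces $\delta e^{c/2} \le 1/(2\sqrt{2})$. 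On the geometric side, Lemma~\ref{lemma:modcont} identifies $\mc{X}_{f \pm \epsilon x}^*$ with $\partial f^*(\mp \epsilon)$, so $\max_\pm \fndist(\mc{X}_f^*, \mc{X}_{f \pm \epsilon x}^*) = H(\epsilon)$ by the definition of $H$. Substituting these into the displayed lower bound gives
\[
\max_{g \in \{f + \epsilon x,\, f - \epsilon x\}} \E_{P_g^T}[\dist(\hat x, \mc{X}_g^*)] \ge H(\epsilon) - \frac{\modcont_f(\stddev/\sqrt{T})}{2\sqrt{2}}.
\]

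To derive the two stated bounds we now play with $c$. For the first inequality, let $c$ range over $(0, \log(1/(8\delta^2)))$ and invoke the preceding lemma, which gives $H(\epsilon) \ge \modcont_f(c_l \epsilon)$ for every $c_l < 1$; reparameterizing via $c' = c_l^2 c$ and sending $c_l \nearrow 1$ replaces $H(\sqrt{c\stddev^2/T})$ by $\modcont_f(\sqrt{c'\stddev^2/T})$ in the displayed inequality, and taking the supremum over $c' \in (0, \log(1/(8\delta^2)))$ produces the stated sup-form lower bound. For the second inequality we instead set $c = \log(1/(8\delta^2))$ directly: the hypothesis $\delta < 1/\sqrt{8e}$ forces $c > 1$, so by monotonicity of $\modcont_f$ together with the same $H$-versus-$\modcont_f$ comparison we deduce $H(\sqrt{c\stddev^2/T}) \ge \modcont_f(\stddev/\sqrt{T})$; the displayed inequality then gives coefficient $1 - 1/(2\sqrt{2}) = (4 - \sqrt{2})/4$ multiplying $H(\sqrt{c\stddev^2/T})$. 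The delicate step throughout is this $H$-versus-$\modcont_f$ conversion, which is precisely why both the strict inequality on $c$ inside the supremum and the quantitative hypothesis $\delta < 1/\sqrt{8e}$ appear in the statement.
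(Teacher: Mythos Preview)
Your overall approach mirrors the paper's: apply Lemma~\ref{lemma:brown-low-stronger} together with the $\chi$-divergence bound of Lemma~\ref{lemma:chi-square-gradients}, identify the minimizer sets of $f\pm\epsilon x$ with $\partial f^*(\mp\epsilon)$, and take the maximum over the two signs to produce $H(\epsilon)$. Your treatment of the second inequality---fixing $c=\log\frac{1}{8\delta^2}$, using $c>1$ to deduce $H(\epsilon_T)\ge\modcont_f(\sigma/\sqrt{T})$ via the $H$-versus-$\modcont_f$ comparison, and absorbing the subtracted term into the coefficient $1-\tfrac{1}{2\sqrt 2}=\tfrac{4-\sqrt 2}{4}$---matches the paper exactly.

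There is, however, a genuine slip in your derivation of the first (sup-form) inequality. When you write ``let $c$ range over $(0,\log\frac{1}{8\delta^2})$'' and substitute $\epsilon=\sqrt{c\sigma^2/T}$ into your displayed bound, the left-hand side becomes $\max_{g\in\{f+\epsilon x,\,f-\epsilon x\}}\E_{P_g^T}[\dots]$ with $\epsilon$ \emph{depending on} $c$; but the proposition's left-hand side involves the \emph{fixed} pair $\{f_1,f_{-1}\}$, i.e.\ the single choice $\epsilon=\epsilon_T$ corresponding to $c=\log\frac{1}{8\delta^2}$. Varying $c$ in the displayed inequality therefore bounds the wrong quantity. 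The fix is precisely the move you already make for the second inequality: keep $c=\log\frac{1}{8\delta^2}$ fixed (so the left-hand side is correct), and for each $c'\in(0,\log\frac{1}{8\delta^2})$ apply the $H$-versus-$\modcont_f$ lemma with $c_l=\sqrt{c'/\log\frac{1}{8\delta^2}}<1$ to get $H(\epsilon_T)\ge\modcont_f(c_l\epsilon_T)=\modcont_f(\sqrt{c'\sigma^2/T})$, whence
\[
\max_{g\in\{f_1,f_{-1}\}}\E_{P_g^T}\bigl[\dist(\hat x,\mc{X}_g^*)\bigr]\;\ge\;H(\epsilon_T)-\frac{\modcont_f(\sigma/\sqrt{T})}{2\sqrt{2}}\;\ge\;\modcont_f\Bigl(\sqrt{\tfrac{c'\sigma^2}{T}}\Bigr)-\frac{\modcont_f(\sigma/\sqrt{T})}{2\sqrt{2}},
\]
and taking the supremum over $c'$ gives the stated bound. (The paper's own proof in fact writes out only the second inequality explicitly and leaves the first implicit, so this is easy to miss.)
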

\begin{proof}
  Without loss of generality, we assume that $0 \in \argmin_x f(x) =
  \partial f^*(0)$, and set $x_0 = 0$ for simplicity in the derivation. For
  any $\epsilon \in \R$, we may construct the function $f_\epsilon(x) = f(x)
  + \epsilon x$.  Lemma~\ref{lemma:brown-low-stronger} and
  Lemma~\ref{lemma:chi-square-gradients} thus yield that that for
  $\mc{X}_\epsilon = \argmin_x f_\epsilon(x)$, we have
  \begin{align}
    \E_{P_{f_\epsilon}^T}[\dist(\what{x}, \mc{X}_\epsilon)]
    & \ge \dist(\partial f^*(-\epsilon), \partial f^*(0))
    \hinge{1 - \frac{ \modcont_f(\stddev / \sqrt{T})
        \sqrt{\delta \exp(\frac{T \epsilon^2}{\stddev^2})}}{
        \dist(\partial f^*(-\epsilon), \partial f^*(0))}}
  \end{align}
  and
  \begin{align}
    \E_{P_{f_{-\epsilon}}^T}[
      \dist(\what{x}, \mc{X}_{-\epsilon})]
    & \ge \dist(\partial f^*(\epsilon), \partial f^*(0))
    \hinge{1 - \frac{\modcont_f(\stddev / \sqrt{T})
        \sqrt{\delta \exp(\frac{T \epsilon^2}{\stddev^2})}}{
        \dist(\partial f^*(\epsilon), \partial f^*(0))}}.
  \end{align}
  In particular, with $H(\epsilon) = \dist(\partial f^*(\epsilon), \partial f^*(0))
  \vee \dist(\partial f^*(\epsilon), \partial f^*(0))$, we have
  \begin{equation}
    \max_{g \in f_\epsilon, f_{-\epsilon}}
    \E_{P_g^T}\left[\dist(\what{x}, \mc{X}_g^*)\right]
    \ge H(\epsilon) \hinge{1 - \frac{ \modcont_f(\stddev / \sqrt{T})
        \sqrt{\delta \exp(\frac{n \epsilon^2}{\stddev^2})}}{
        H(\epsilon)}}.
    \label{eqn:g-lower-bound}
  \end{equation}
  Take $\epsilon^2 = \frac{\stddev^2}{T} \log \frac{1}{8 \delta^2}$
  to obtain
  \begin{equation}
    \max_{g \in f_\epsilon, f_{-\epsilon}}
    \E_{P_g^T}\left[\dist(\what{x}, \mc{X}_g^*)\right]
    \ge H\left(\sqrt{\frac{\stddev^2 \log \frac{1}{8 \delta^2}}{T}}\right)
    \hinge{1 - \frac{\modcont_f(\stddev / \sqrt{T})}{
        2\sqrt{2} H(\stddev \log^\half \frac{1}{8 \delta^2} / \sqrt{T})}}.
  \end{equation}
  Notably, by Lemma~\ref{lemma:modcont}, our w.l.o.g.\ assumption
  and the fact that
  subgradients are increasing, we have that for any constant
  $(\log \frac{1}{8\delta^2})^{-\half} \le c < 1$ that
  \begin{align}
    \hskip-5pt
    & \modcont_f\left(\frac{\stddev}{\sqrt{T}}\right)
     \le \modcont_f\left(c \sqrt{\frac{\stddev^2 \log \frac{1}{8\delta^2}}{T}}
    \right) \\
    & \hskip-5pt \le \sup\left\{\dist(x, X_0) : x \in \partial f^*\left(
    c \frac{\stddev \log^\half \frac{1}{8\delta^2}}{\sqrt{T}}\right)\right\}
    \vee
    \sup\left\{\dist(x, X_0) : x \in \partial f^*\left(
    -c \frac{\stddev \log^\half \frac{1}{8\delta^2}}{\sqrt{T}}\right)\right\} \\
    & \le \sup\left\{\dist(x, X_0) : x \in \partial f^*\left(
    \frac{\stddev \log^\half \frac{1}{8\delta^2}}{\sqrt{T}}\right)\right\}
    \vee
    \sup\left\{\dist(x, X_0) : x \in \partial f^*\left(
    - \frac{\stddev \log^\half \frac{1}{8\delta^2}}{\sqrt{T}}\right)\right\} \\
    & = H\left(\frac{\stddev \log^\half \frac{1}{8\delta^2}}{\sqrt{T}}\right).
  \end{align}
  In particular,  we have the lower bound
  \begin{equation}
    \max_{g \in f_\epsilon, f_{-\epsilon}}
    \E_{P_g^T}\left[\dist(\what{x}, \mc{X}_g^*)\right]
    \ge H\left(\sqrt{\frac{\stddev^2 \log \frac{1}{8 \delta^2}}{T}}\right)
    \frac{4 - \sqrt{2}}{4}.
  \end{equation}
  This is the desired result.
\end{proof}

Proposition~\ref{proposition:basic-super-efficiency} is a basic result on
superefficiency that we may specialize to obtain more concrete results. We
would like give a result that holds when $f^*$ is differentiable in a
neighborhood of $0$, which is equivalent to $f$ being strictly convex in a
neighborhood of $x_0 = \argmin_x f(x)$, by
Lemma~\ref{lemma:subgradients}. This would mean that the function $H$
defined in Proposition~\ref{proposition:basic-super-efficiency} satisfies
\begin{equation}
  H(\epsilon) = \max\{|{f^*}'(\epsilon) - x_0|, |{f^*}'(-\epsilon) - x_0|\}
  = \modcont_f(\epsilon)
\end{equation}
for all small enough $\epsilon > 0$.
In this setting, we obtain
\begin{corollary}
  Let the conditions of Proposition~\ref{proposition:basic-super-efficiency}
  hold, and let $f$ be strictly convex in a neighborhood of
  $x_0 = \argmin_x f(x)$. Assume that $\what{x}$ is any estimator
  satisfying
  \begin{equation}
    \sqrt{\E_{P_f^T}\left[(\what{x} - x_0)^2\right]}
    \le \delta \modcont_f(\stddev / \sqrt{T}),
  \end{equation}
  where $\delta < \sqrt{\frac{1}{8e}}$. Define $f_{\pm 1}(x) = f(x) \pm
  \sqrt{\frac{\stddev^2 \log \frac{1}{8\delta^2}}{T}} x$.
  Then for large enough $T$,
  \begin{equation}
    \max_{g \in \{f_1, f_{-1}\}}
    \E_{P_g^T}|\what{x} - x_g\opt|
    \ge \frac{4 - \sqrt{2}}{4}
    \modcont_f\left(\sqrt{\frac{\stddev^2 \log \frac{1}{8\delta^2}}{T}}\right).
  \end{equation}
\end{corollary}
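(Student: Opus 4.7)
The plan is to deduce the corollary as a direct consequence of Proposition~\ref{proposition:basic-super-efficiency}, by replacing the auxiliary quantity $H(\epsilon)$ appearing there with the modulus of continuity $\modcont_f(\epsilon)$ under the extra strict-convexity hypothesis.

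First I would verify that the hypothesis of the proposition is implied by that of the corollary. Strict convexity of $f$ in a neighborhood of $x_0$ forces $\mc{X}_f^* = \argmin_x f(x) = \{x_0\}$, so $\dist(\what{x}, \mc{X}_f^*) = |\what{x} - x_0|$, and the assumed second-moment bound on $\what{x} - x_0$ matches the squared-distance bound required by the proposition. For $T$ large enough that $\epsilon_T := \sqrt{\sigma^2 \log(1/(8\delta^2))/T}$ is small, the perturbed functions $f_{\pm 1}(x) = f(x) \pm \epsilon_T x$ also have unique minimizers $x_{f_{\pm 1}}^*$ lying in the strict-convexity neighborhood of $x_0$; this follows from Lemma~\ref{lemma:modcont}, which identifies $\argmin f_{\pm 1} = \partial f^*(\mp \epsilon_T)$, combined with the continuous differentiability of $f^*$ near $0$ guaranteed by Lemma~\ref{lemma:subgradients}. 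Consequently $\dist(\what x, \mc{X}_g^*) = |\what x - x_g^*|$ for $g \in \{f_1, f_{-1}\}$, so the conclusion of the proposition can be rewritten using absolute values.

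Next, I would invoke Proposition~\ref{proposition:basic-super-efficiency} to obtain
\[
\max_{g \in \{f_1, f_{-1}\}} \E_{P_g^T}\, |\what x - x_g^*| \;\ge\; \frac{4 - \sqrt 2}{4}\, H(\epsilon_T),
\]
where $H$ is the function defined in~\eqref{eqn:def-big-H}. By the final clause of Lemma~\ref{lemma:modcont}, strict convexity of $f$ near the unique minimizer yields, for all sufficiently small $\epsilon > 0$, that $\partial f^*(\pm\epsilon)$ and $\partial f^*(0)$ are singletons and
\[
\modcont_f(\epsilon) = \max\{|{f^*}'(\epsilon) - x_0|,\; |{f^*}'(-\epsilon) - x_0|\}.
\]
Comparing with the definition~\eqref{eqn:def-big-H} shows that this quantity coincides exactly with $H(\epsilon)$. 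Choosing $T$ large enough that $\epsilon_T$ lies in this regime then converts the bound into $\frac{4-\sqrt 2}{4}\modcont_f(\epsilon_T)$, which is the claimed inequality.

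The main obstacle is the bookkeeping of the threshold for ``large enough $T$'': one must simultaneously ensure that $\epsilon_T$ is small enough for the identity $H(\epsilon_T) = \modcont_f(\epsilon_T)$ to hold, that $f^*$ is differentiable at $\pm\epsilon_T$, and that each $f_{\pm 1}$ has a unique minimizer. All three requirements are controlled by the radius of the strict-convexity neighborhood of $x_0$, so a single threshold $T_0$ suffices; beyond this step, the proof is a mechanical application of Proposition~\ref{proposition:basic-super-efficiency}.
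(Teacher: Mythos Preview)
Your proposal is correct and follows essentially the same route as the paper: the paper observes, in the paragraph immediately preceding the corollary, that strict convexity of $f$ near $x_0$ is equivalent (via Lemma~\ref{lemma:subgradients}) to differentiability of $f^*$ near $0$, whence $H(\epsilon) = \modcont_f(\epsilon)$ for small $\epsilon$, and then reads the corollary off from Proposition~\ref{proposition:basic-super-efficiency}. Your additional care in checking that the perturbed functions $f_{\pm 1}$ have unique minimizers is a detail the paper leaves implicit but is consistent with its argument.
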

This corollary has a striking weakness, however---the right hand side
depends on $\modcont_f$, rather than $\modcont_g$, which is what we would
prefer.  We can, however, state a simpler result that is achievable.
\begin{corollary}
  \label{corollary:good-asymptotic-super-efficiency}
  Let $f$ be any convex function satisfying the asymptotic
  expansion~\eqref{eqn:local-growth} around its optimum.
  Suppose that $\what{x}$ is any estimator such that
  \begin{equation}
    \sqrt{\E_{P_f^T}[\dist(\what{x}, \mc{X}_f^*)^2]}
    \le \delta \modcont_f\left(\frac{\stddev}{\sqrt{T}}\right),
  \end{equation}
  where $\delta < \sqrt{\frac{1}{8 e}}$.
  Define $g_{-1}(x) = f(x) - \epsilon_T x$ and $g_1(x) = f(x) + \epsilon_T x$,
  where $\epsilon_T = \sqrt{\frac{\stddev^2 \log \frac{1}{8\delta^2}}{T}}$,
  and let $\pow = \pow_r \vee \pow_l$.
  Let $C > 1$ and $0 < c < 1$ be otherwise arbitrary numerical constants.
  Then for one of $g \in \{g_{-1}, g_1\}$, there exists
  $T_0$ such that $T \ge T_0$ implies
  \begin{equation}
    \E_{P_g}\left[\dist(\what{x}, \mc{X}_g^*)\right]
    \ge \frac{4 - \sqrt{2}}{4 (2C^2)^\frac{1}{\pow - 1} e}
    \, \modcont_g\left(
    c \sqrt{\frac{\stddev^2 \log \frac{1}{8 \delta^2}}{T}}
    \right).
  \end{equation}
\end{corollary}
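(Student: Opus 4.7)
The plan is to combine Proposition~\ref{proposition:basic-super-efficiency}, instantiated at the scale $\epsilon_T=\sqrt{\stddev^2 \log(1/(8\delta^2))/T}$, with the one-sided polynomial asymptotics of Lemmas~\ref{lemma:growth-f*} and~\ref{lemma:modconts-growth}, so as to convert the lower bound stated in terms of $H(\epsilon_T)$ into one stated in terms of $\modcont_g$ for the appropriate nearby perturbation $g_{\pm 1}$.

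First I would apply Proposition~\ref{proposition:basic-super-efficiency} directly; the hypothesis $\delta<\sqrt{1/(8e)}$ guarantees $\epsilon_T>0$ (indeed $\log\frac{1}{8\delta^2}>0$), and the proposition yields
\begin{equation}
\max_{g\in\{g_{-1},g_1\}}\E_{P_g^T}\left[\dist(\what x,\mc X_g^*)\right]\;\ge\;\frac{4-\sqrt 2}{4}\,H(\epsilon_T).
\end{equation}

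Second, I would pick which perturbation to use according to the flatter side of the flat set $\mc X_f^*=[x_l,x_r]$. Assume without loss of generality that $\pow=\pow_r\vee\pow_l$ is attained at $\pow_r$, with $\lambda_r\ge\lambda_l$ in case of a tie; the opposite case is fully symmetric and uses $g_1$ in place of $g_{-1}$. Under this assumption the hypothesis of Lemma~\ref{lemma:modconts-growth} is satisfied for $g_{-1}(x)=f(x)-\epsilon_T x$, so that for $T$ sufficiently large
\begin{equation}
\modcont_{g_{-1}}(c\epsilon_T)\;\le\;(2C)^{1/(\pow_r-1)}\,(\epsilon_T/\lambda_r)^{1/(\pow_r-1)}.
\end{equation}
At the same time, Lemma~\ref{lemma:growth-f*}, specialized to the right side of the flat set, gives
\begin{equation}
H(\epsilon_T)\;\ge\;\dist(\partial f^*(\epsilon_T),\partial f^*(0))\;\ge\;\left(\frac{\epsilon_T}{C\lambda_r\pow_r}\right)^{1/(\pow_r-1)}.
\end{equation}

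Third, I would divide the two previous displays and invoke the elementary inequality $\pow_r^{1/(\pow_r-1)}\le e$ (valid for all $\pow_r>1$) to conclude
\begin{equation}
H(\epsilon_T)\;\ge\;\frac{\modcont_{g_{-1}}(c\epsilon_T)}{(2C^2)^{1/(\pow_r-1)}\,e},
\end{equation}
and then substitute into the first display to obtain the claim with $g=g_{-1}$. The main obstacle is the asymmetric bookkeeping: Lemma~\ref{lemma:modconts-growth} only controls $\modcont_g$ for the perturbation whose minimizer is pulled toward the flatter side, and one must verify that the one-sided lower bound on $H$ and the one-sided upper bound on $\modcont_{g_{-1}}$ are both governed by the same local polynomial expansion at $x_r$, so that the ratio of their constants collapses to the clean prefactor $(2C^2)^{1/(\pow-1)}e$ appearing in the statement.
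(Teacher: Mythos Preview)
Your approach is essentially the paper's: the same WLOG reduction to the right side, the same use of Lemma~\ref{lemma:growth-f*} to lower-bound $H(\epsilon_T)$ by the right-side polynomial scale, and the same use of Lemma~\ref{lemma:modconts-growth} to upper-bound $\modcont_{g_{-1}}(c\epsilon_T)$, so that their ratio collapses to $(2C^2)^{1/(\pow-1)}e$.

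There is one small gap. Your first display quotes only the \emph{max} conclusion of Proposition~\ref{proposition:basic-super-efficiency},
\[
\max_{g\in\{g_{-1},g_1\}}\E_{P_g^T}\bigl[\dist(\what x,\mc X_g^*)\bigr]\ge \tfrac{4-\sqrt2}{4}\,H(\epsilon_T),
\]
and then you substitute the bound $H(\epsilon_T)\ge (2C^2)^{-1/(\pow-1)}e^{-1}\modcont_{g_{-1}}(c\epsilon_T)$ to ``obtain the claim with $g=g_{-1}$.'' But the max could in principle be attained at $g_1$, in which case you have only shown $\E_{P_{g_1}}[\dist]\ge C'\modcont_{g_{-1}}(c\epsilon_T)$, with mismatched subscripts. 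The corollary requires the same $g$ on both sides. The paper avoids this by going \emph{inside} the proof of the proposition and extracting the one-sided inequality
\[
\E_{P_{g_{-1}}^T}\bigl[\dist(\what x,\mc X_{g_{-1}}^*)\bigr]\ \ge\ \dist(\partial f^*(\epsilon_T),\partial f^*(0))\cdot\Bigl[1-\tfrac{\modcont_f(\sigma/\sqrt T)}{2\sqrt2\,\dist(\partial f^*(\epsilon_T),\partial f^*(0))}\Bigr]_+,
\]
and then noting that under your WLOG assumption the right side is the dominant one, so $\dist(\partial f^*(\epsilon_T),\partial f^*(0))=H(\epsilon_T)\ge \modcont_f(\sigma/\sqrt T)$ and the bracket is at least $(4-\sqrt2)/4$. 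This directly gives the bound for $g_{-1}$ itself. Insert this one line and your argument is complete.
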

\begin{proof}
  Without loss of generality, we assume that
  $\pow_r \ge \pow_l$, and if $\pow_l = \pow_r$ then $\lambda_r \ge
  \lambda_l$.
  By inspection of the proof of
  Proposition~\ref{proposition:basic-super-efficiency},
  we have that
  \begin{equation}
    \E_{P_{g_{-1}}^T}[\dist(\what{x}, \partial g_{-1}^*(0))]
    \ge \half \dist(\partial f^*(\epsilon_T), \partial f^*(0)).
  \end{equation}
  Moreover, we know that for suitably large $n$,
  we have by Lemma~\ref{lemma:growth-f*}
  \begin{align}
    \dist(\partial f^*(\epsilon_T), \partial f^*(0))
    & = \dist(\partial f^*(\epsilon_T), \partial f^*(0))
    \vee \dist(\partial f^*(-\epsilon_T), \partial f^*(0)) \\
    & \ge \modcont_f(c \epsilon_T)
  \end{align}
  for any $c < 1$. Then Lemma~\ref{lemma:modconts-growth}
  implies that for any $C > 1$, there exists $T_0$ such that $T \ge T_0$ implies
  \begin{equation}
    \modcont_f(c \epsilon_T)
    \ge
    \frac{1}{(2C^2)^\frac{1}{\pow - 1} e}
    \modcont_{g_{-1}}(c^2 \epsilon_T).
  \end{equation}
  This gives the desired result.
\end{proof}

As an immediate consequence of
Corollary~\ref{corollary:good-asymptotic-super-efficiency}, we see that if
there exists any sequence $\delta_T \to 0$ with $\liminf_T e^T \delta_T =
\infty$ such that
\begin{equation}
  \sqrt{\E_{P_f}\left[\dist(\what{x}, \mc{X}_f^*)^2\right]}
  \le \delta_T \modcont_f\left(\frac{\stddev}{\sqrt{T}}\right),
\end{equation}
then there exists a sequence of convex functions $g_T$,
with $\fnmetric(f, g_T) \to 0$, such that
\begin{equation}
  \liminf_T \frac{\E_{P_{g_T}}
    \left[\dist(\what{x}, \mc{X}_{g_T})\right]}{
    \modcont_{g_T}\left(\sqrt{\frac{\stddev^2
        \log {\delta_T}^{-1}}{T}}\right)}
  > 0.
\end{equation}

\section{Algorithm }

\subsection{Proof of Proposition \ref{prop:algorithm}}
First, by the monotonicity of the derivative $f'$, note that the interval $\mc{I}_\delta$ is such that $x\in \mc{I}_\delta$ holds if and only if $|f'(x)|< C_\delta/\sqrt{T_0}$.
Now suppose that at round $e$, $(a_e,b_e)\cap \mc{I}_\delta\neq \emptyset$.
For the next round, if $x_e=(a_e+b_e)/2\in\mc{I}_\delta$, then $(a_{e+1},b_{e+1})\cap\mc{I}_\delta\neq\emptyset$.
Otherwise, if $x_e\notin \mc{I}_\delta$, we know that $|f'(x_e)|\geq C_\delta/\sqrt{T_0}$, and without loss of generality, we assume that it is positive.
Then, we have 
\begin{align}
\mathbb P\left((a_{e+1},b_{e+1})\cap\mc{I}_\delta\neq\emptyset\right)&=
\mathbb P\left(\normal\left(f'(x_e),\frac{\sigma^2}{T_0}\right)<0\right)
=\mathbb P\left(\normal(0,1)>\frac{\sqrt{T_0}f'(x_e)}{\sigma}\right)\\
&\leq \mathbb P\left(\normal(0,1)>\frac{C_\delta}{\sigma}\right)
\leq\frac{\sigma}{C_\delta\sqrt{2\pi}}\exp\left(-\frac{C_\delta^2}{2\sigma^2}\right)
\end{align}
Therefore,
\begin{equation}
\mathbb P\left((a_{e+1},b_{e+1})\cap \mc{I}_\delta\neq \emptyset\big|(a_{e},b_{e})\cap \mc{I}_\delta\neq \emptyset\right)\geq 1-\frac{\sigma}{C_\delta\sqrt{2\pi}}\exp\left(-\frac{C_\delta^2}{2\sigma^2}\right)
\end{equation}
It then follows that 
\begin{align}
\mathbb P\left((a_E,b_E)\cap \mc{I}_\delta\neq \emptyset\right)&=\mathbb P\left((a_e,b_e)\cap \mc{I}_\delta\neq \emptyset\text{ for }e=1,\dots, E\right)\\
&=\prod_{e=0}^{E-1}\mathbb P\left((a_{e+1},b_{e+1})\cap \mc{I}_\delta\neq \emptyset\big|(a_e,b_e)\cap \mc{I}_\delta\neq \emptyset\right)\\
&\geq \left(1-\frac{\sigma}{C_\delta\sqrt{2\pi}}\exp\left(-\frac{C_\delta^2}{2\sigma^2}\right)\right)^E\\
&\geq 1-\frac{E\sigma}{C_\delta\sqrt{2\pi}}\exp\left(-\frac{C_\delta^2}{2\sigma^2}\right)\\
&\geq 1-\delta
\end{align}
by the choice of $C_\delta$.

\subsection{Proof of Corollary \ref{cor:algorithm}}
By the polynomial growth condition, we have for $T>\sigma^2/\epsilon_0$,
\begin{align}
\omega_f(\epsilon_0)\leq \left(\frac{\epsilon_0\sqrt{T}}{\sigma}\right)^\alpha\omega_f\left(\frac{\sigma}{\sqrt{T}}\right).
\end{align}
Since $r=\half\alpha_0\geq\half\alpha$ and $E=\lfloor r\log T\rfloor$,
\begin{align}
2^{-E}(b_0-a_0)\leq 2(b_0-a_0)T^{-r}\leq 2(b_0-a_0)T^{-\half\alpha}
\leq \frac{2(b_0-a_0)\epsilon_0^\alpha}{\omega_f(\epsilon_0)\sigma^\alpha}\omega_f\left(\frac{\sigma}{\sqrt{T}}\right)
\end{align}
By the expression we obtained in Example~\ref{exmp:one-dim-modulus},
\begin{align}
&\sup\{\inf_{x\in\mc{X}_f^*}|x-y|:y\in\mc{I}_\delta\}\\
&=\omega_f\left(\frac{C_\delta}{\sqrt{T_0}}\right)
\leq \left(\sqrt{2r\left(\log(r\log T)+\log\frac{1}{\delta}\right)\log T}\right)^{\alpha}\omega_f\left(\frac{\sigma}{\sqrt{T}}\right)
\end{align}
for $T$ large enough. 
Therefore, we obtain that there exist $T'>0$ such that for $T>T'$,
\begin{equation}
\inf_{x\in\mc{X}_f^*} |x_E-x|\leq \tilde C\omega_f\left(\frac{1}{\sqrt{T}}\right)
\end{equation}
where
\begin{equation}
\tilde C = \frac{2(b_0-a_0)\epsilon_0^\alpha}{\omega_f(\epsilon_0)\sigma^\alpha} + \left(\sqrt{2r\left(\log(r\log T)+\log\frac{1}{\delta}\right)\log T}\right)^{\alpha}.
\end{equation}

\end{appendix}

\section*{Acknowledgments}


Research supported in part by ONR grant 11896509 and NSF grant
DMS-1513594.

\setlength{\bibsep}{8pt}
\bibliographystyle{apalike}
\bibliography{local_minimax}

\end{document}